\crefname{hypothesis}{Hypothesis}{Hypotheses}
\title{Maximum A Posteriori Inference of Random Dot Product Graphs via Conic Programming\thanks{Submitted to the editors January 6, 2021.
\funding{}}}
\author{David Wu\thanks{Departments of Math and Computer Science, Massachusetts Institute of Technology
  (\email{dxwu@mit.edu}).} \and
  David Palmer \thanks{Geometric Data Processing Group, Massachusetts Institute of Technology
  (\email{drp@mit.edu}).} \and
  Daryl DeFord \thanks{Department of Mathematics and Statistics, Washington State University (\email{daryl.deford@wsu.edu}).}}
\DeclareMathOperator{\diag}{diag}
\newcommand{\1}{\mathbbm{1}}
\newcommand{\RR}{\mathbb{R}}
\newcommand{\PP}{\mathbb{P}}
\newcommand{\EE}{\mathbb{E}}
\DeclareMathOperator{\tr}{tr}
\DeclareMathOperator{\Nul}{Nul}
\newcommand{\norm}[1]{\left\lVert#1\right\rVert}
\DeclareMathOperator{\rank}{rank}
\begin{document}

\maketitle

\begin{abstract}
We present a convex cone program to infer the latent probability matrix of a random dot product graph (RDPG). The optimization problem maximizes the Bernoulli maximum likelihood function with an added nuclear norm regularization term. The dual problem has a particularly nice form, related to the well-known semidefinite program relaxation of the \textsc{maxcut} problem. Using the primal-dual optimality conditions, we bound the entries and rank of the primal and dual solutions. Furthermore, we bound the optimal objective value and prove asymptotic consistency of the probability estimates of a slightly modified model under mild technical assumptions. Our experiments on synthetic RDPGs not only recover natural clusters, but also reveal the underlying low-dimensional geometry of the original data. We also demonstrate that the method recovers latent structure in the Karate Club Graph and synthetic U.S. Senate vote graphs and is scalable to graphs with up to a few hundred nodes.
\end{abstract}

\begin{keywords}
  random dot product graph, maximum a posteriori, Bayesian inference, regularization, maximum likelihood estimation, inference, conic programming, convex relaxation, clustering, graph embedding, latent vectors, low rank, consistency
\end{keywords}

\begin{AMS}
  62F15, 90C35, 65F55, 68R10, 15B48
\end{AMS}

\section{Introduction}
Real-world networks appearing in neuroscience, sociology, and machine learning encode an enormous amount of structure in a combinatorial form that is challenging to analyze. A promising analytical approach is to make this structure manifest as geometry. Generative random graph models offer a statistically rigorous way to formulate these problems.

Among the simplest random graph models is the well-known Erd\"{o}s-R\'{e}nyi graph $G(n, p)$, a random graph on $n$ nodes with independent edge probabilities all equal to $p$. Because all vertices look the same in this model, it is too weak to explain common latent structures that appear in real graphs, such as clusters.

It is natural to hypothesize that such structures appear due to underlying affinities between nodes. For example, nodes in a social network represent people who might cluster together based on their shared interests. This hypothesis suggests a natural \emph{geometric} random graph model in which nodes have latent vectors of characteristics, and edge probabilities come from the inner products of these vectors. This model, first studied by Young and Scheinerman \cite{young2007random}, is called the \emph{random dot product graph} (RDPG). In fact, many random graph models, including the Erd\"{o}s-R\'{e}nyi graph and the \emph{stochastic block model} (SBM) \cite{abbe2017community} can be viewed as special cases of the RDPG model.

In this paper, we study the problem of recovering the latent vectors from a graph under the RDPG assumption. While the asymptotic behavior of the RDPG model has been analyzed carefully, we show that with an appropriate prior, the maximum \emph{a posteriori} (MAP) problem can be formulated as a convex cone program, featuring a single semidefinite cone constraint coupled to many exponential cone constraints. This problem is easy to formulate, and it is solvable in polynomial time via commodity conic solvers \cite{mosek,scs}. Moreover, we show that under mild sparsity and nondegeneracy assumptions the solution for a closely related model is asymptotically consistent, and that in practice, our method can reveal a variety of latent geometry in RDPGs. In doing so, we take first steps towards connecting the subfields of latent vector graph models and conic programming.


\subsection{Preliminaries}
Let $G = (V, E)$ be a simple undirected unweighted graph with $n = |V|$ nodes and $m = |E|$ edges. For distinct $i$ and $j$, we write $i \sim j$ if $(i,j) \in E$, and $i \not\sim j$ if $(i,j) \not\in E$. In the RDPG model, every node $i \in V$ in the graph comes with a \emph{latent vector} $X_i \in \RR^d$, and edge probabilities are given by the inner products of these latent vectors, $\PP[i \sim j] = X_i^\top X_j$. We assemble these edge probabilities into the \emph{edge probability matrix} $\bm{P}$. It is important to note that diagonal elements of $\bm{P}$ do not have any probabilistic interpretation since our graphs are assumed to be simple. Conditioned on $\bm{P}$, an RDPG with adjacency matrix $\bm{A}$ has entries $A_{ij}$ that are independent Bernoulli random variables with corresponding parameters $P_{ij}$. That is, we have 
\[
\PP[\bm{A} | \bm{P}] = \prod_{i < j} P_{ij}^{A_{ij}}(1-P_{ij})^{1-A_{ij}}.
\]

An implicit assumption of the RDPG model is that the embedding satisfies the constraints $0 \le P_{ij} \le 1$ for all $i \neq j$. 
One can either view the node embeddings as fixed or themselves drawn from a \emph{latent vector distribution}. Although most prior works use the linear dot product structure, the recent work of O'Connor et al. \cite{o2015maximum} forms the edge probability matrix from the latent vectors using a more general \emph{link function} $\kappa(\RR^d, \RR^d) \to [0,1]$ such as the logistic sigmoid function.

The primary problem of interest is the inference of either the probability matrix $\bm{P}$ or the latent vectors $\bm{X} \coloneqq \begin{pmatrix} X_1^\top & \dots & X_n^\top \end{pmatrix}^\top \in \RR^{n \times d}$ from one sample of $\bm{A}$. Note that for a fixed $\bm{P}$, the latent embedding $\bm{X}$ is only unique up to a $d$-dimensional orthogonal transformation. However, it is still possible to extract an embedding from $\bm{P}$ via a Cholesky decomposition or eigendecomposition.

\subsection{Maximum Likelihood} In this paper we solve an MAP problem with conic programming to recover the probability matrix and latent vectors of an RDPG. In this section, we first formulate the maximum likelihood problem, and then we show that regularization is necessary to recover a nontrivial solution.

Observe that $\bm{P} = \bm{X}\bm{X}^\top$ is positive semidefinite (PSD) by construction. Conversely, if $\bm{P}$ is PSD then it can be decomposed as $\bm{X}\bm{X}^\top$ for some $\bm{X} \in \RR^{n\times k}$ where $k \leq n$. Given an adjacency matrix $\bm{A}$, this leads to the following maximum likelihood problem \cref{prob:mle}:
\begin{equation}\tag{MLE}\label{prob:mle}
\begin{alignedat}{3}
\max_{\bm{P}}              &\quad&  \sum_{i \sim j} \log P_{ij} + \sum_{i \not\sim j} \log(1 - P_{ij})  &&          & \\
\text{subject to: } &\quad&  0 \le P_{ij} \le 1    && &\quad \text{ for all } i \neq j\\
                    &\quad&  \bm{P} \succeq 0,  &&   &\quad 
\end{alignedat}
\end{equation}
where the notation $\bm{P} \succeq 0$ means that $\bm{P}$ is PSD. It is important to note that the diagonal entries of $\bm{P}$ are unconstrained. This makes \cref{prob:mle} highly underconstrained---as we show below, the solution reproduces $\bm{A}$ with nonzero diagonal entries to ensure semidefiniteness. 

\begin{proposition}\label{prop:exact}
There exists an optimal solution $\bm{P}^*$ of \cref{prob:mle} whose off-diagonal entries coincide with those of $\bm{A}$.
\end{proposition}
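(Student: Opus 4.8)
The plan is to recognize that the objective is bounded above by $0$ and that this bound is attained precisely by matrices whose off-diagonal block agrees with $\bm{A}$, so the whole content of the statement is the existence of a \emph{feasible} (in particular, PSD) matrix with that off-diagonal block. First I would note that every term in the objective is nonpositive: for $i \sim j$ we have $\log P_{ij} \le 0$ since $P_{ij} \le 1$, and for $i \not\sim j$ we have $\log(1 - P_{ij}) \le 0$ since $P_{ij} \ge 0$. Hence the objective of \cref{prob:mle} is at most $0$ for any feasible $\bm{P}$, with equality if and only if $P_{ij} = 1$ whenever $i \sim j$ and $P_{ij} = 0$ whenever $i \not\sim j$; that is, if and only if $P_{ij} = A_{ij}$ for all $i \neq j$.

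Next I would exhibit a feasible matrix achieving this value. Define $\bm{P}^*$ by $P^*_{ij} = A_{ij}$ for $i \neq j$ and $P^*_{ii} = n$ (any value at least $n-1$ works). The off-diagonal constraints $0 \le P^*_{ij} \le 1$ hold because $A_{ij} \in \{0,1\}$, and the diagonal entries are unconstrained by \cref{prob:mle}. It remains to check $\bm{P}^* \succeq 0$. Since $\bm{P}^*$ is symmetric with nonnegative diagonal, and for each row the sum of the absolute values of the off-diagonal entries is $\sum_{j \neq i} A_{ij} \le n - 1 \le P^*_{ii}$, the matrix $\bm{P}^*$ is (weakly) diagonally dominant; by the Gershgorin circle theorem every eigenvalue lies in a disk centered at $P^*_{ii} \ge 0$ of radius at most $P^*_{ii}$, hence every eigenvalue is nonnegative and $\bm{P}^* \succeq 0$.

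Combining the two steps: $\bm{P}^*$ is feasible for \cref{prob:mle} and attains the objective value $0$, which we showed is an upper bound over all feasible points, so $\bm{P}^*$ is optimal, and by construction its off-diagonal entries coincide with those of $\bm{A}$. There is no real obstacle here; the only point requiring a moment's care is verifying that the padded diagonal makes $\bm{P}^*$ PSD, which diagonal dominance handles immediately. I would close by remarking that this argument makes precise the claim in the surrounding text that \cref{prob:mle} is badly underconstrained: it not only has a trivial optimal value but admits optimizers that merely reproduce $\bm{A}$ off the diagonal, which motivates the nuclear-norm regularization introduced next.
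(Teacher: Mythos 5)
Your proof is correct and follows essentially the same route as the paper: bound the objective above by $0$, match the off-diagonal entries of $\bm{A}$, and pad the diagonal to enforce $\bm{P}^* \succeq 0$ (the paper writes this as $\bm{P}^* = \bm{A} + \lambda \bm{I}$ for large $\lambda$, while you fix the diagonal at $n$ and verify positive semidefiniteness via diagonal dominance). The extra explicitness is fine but does not change the argument.
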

\begin{proof}
  For feasible $P_{ij}$, we have $\log P_{ij} \le 0$, with equality achieved at $P_{ij} = 1$. Similarly, $\log(1-P_{ij}) \le 0$, with equality achieved at $P_{ij} = 0$. Thus, the objective function can be maximized by setting $P_{ij} = 1$ if $i \sim j$ and $P_{ij} = 0$ if $i \not\sim j$, so long as the semidefiniteness constraint is satisfied. As the diagonal of $\bm{P}$ is unconstrained, we can ensure $\bm{P}^* \succeq 0$ by shifting the diagonal. In particular, $\bm{P}^* = \bm{A} + \lambda \bm{I} \succeq 0$ for sufficiently large $\lambda$. 
\end{proof}

\subsection{Regularization and MAP estimate}
In view of \cref{prop:exact}, it is prudent to regularize \cref{prob:mle} to obtain a nontrivial solution. We do so with the hope of encouraging low rank solutions, which are of particular theoretical and practical interest. For example, low rank solutions are easier to visualize. In addition, low-rank solutions to semidefinite optimization problems are often more efficient to compute via specialized algorithms that take advantage of the low rank, such as the B\"urer-Monteiro method and extensions \cite{burer2003nonlinear, boumal2016non, cifuentes2019burer}. One might hope that such low-rank methods would extend to the optimization problems featured in this paper.

With the goal of a low-rank estimate in mind, we choose a prior that encourages the vector of eigenvalues of the probability matrix to be sparse. One such prior is the Laplace distribution on $\tr \bm{P}$. This aligns with a common heuristic for encouraging low rank solutions, which penalizes the \emph{nuclear norm} $\norm{\bm{P}}_* \coloneqq \sum_i \sigma_i$, where $\sigma_i$ are the singular values of $\bm{P}$ \cite{recht2010guaranteed, davenport2016overview}. Some justification for the heuristic can be provided by noting that the nuclear norm is the convex envelope of the rank function, i.e. $\rank \bm{P} \ge \frac{\norm{\bm{P}}_*}{\norm{\bm{P}}}$, where $\norm{\bm{P}}$ denotes the operator norm of $\bm{P}$.
The nuclear norm of a PSD matrix is just its trace. This leads to the following refinement \cref{prob:reg} of \cref{prob:mle}, which can be interpreted as an MAP estimation problem under the aforementioned Laplace prior, parametrized by a real scalar $C$: 
\begin{equation}\tag{REG}\label{prob:reg}
\begin{alignedat}{3}
\max_{\bm{P}}              &\quad&  \sum_{i \sim j} \log P_{ij} + \sum_{i \not\sim j} \log(1-P_{ij}) - C\tr \bm{P} &&          & \\
\text{subject to: } &\quad&  0 \le P_{ij} \le 1    && &\quad \text{ for all } i \neq j\\
                    &\quad&  \bm{P} \succeq 0  &&   &\quad 
\end{alignedat}
\end{equation}
As with any inference technique, the MAP estimate $\bm{P}^*$ varies with the observed realization $\bm{A}$; given a nondegenerate true edge probability matrix $\bm{P}$, \emph{any} simple undirected graph is realizable, including the empty graph. As such, we cannot expect to prove deterministic bounds on inference performance; probabilistic results are more reasonable.

The constraints of \cref{prob:reg} are of semidefinite program (SDP) type, but the objective function is nonlinear. The problem can be rewritten as a convex cone program. Introducing new variables $\alpha_{ij} \le \log P_{ij}$ for $i \neq j$ and $\beta_{ij} \le \log(1-P_{ij})$ for $i \neq j$, and noting that at optimality these inequalities turn into equalities, we can express the original objective function in a manifestly convex form. We can encode these inequalities with exponential cones. Conveniently, including these inequalities for all $i \neq j$ also automatically enforces the inequality constraints $0 \le P_{ij} \le 1$. We state the definitions of the primal and dual exponential cones below for reference \cite{mosek}.
\begin{definition}[exponential cones]\label{def:exp}
The \emph{primal exponential cone} is the set 
\[ K_{\mathrm{exp}} \coloneqq \{(x, y, z) \in \RR^3 : x \ge y\exp(z/y), y > 0\} \cup \{(x, 0, z): x \ge 0, z \le 0\}. \]
The \emph{dual exponential cone} is its conic dual
\[ K_{\mathrm{exp}}^* \coloneqq \{(u, v, w) \in \RR^3 : u \ge -w\exp(v/w - 1), u > 0, w < 0\} \cup \{(u, v, 0) : u, v \ge 0\}. \]
\end{definition}

Applying these transformations, we obtain the following convex cone program, equivalent to \cref{prob:reg}:
\begin{equation}\tag{CREG}\label{prob:creg}
\begin{alignedat}{3}
\max             &\quad&  \sum_{i \sim j} \alpha_{ij} + \sum_{i \not\sim j} \beta_{ij} - C\tr\bm{P} &&          \\
\text{subject to: } 
                    &\quad&  \bm{P} \succeq 0  &&   \\
                    &\quad& (P_{ij}, 1, \alpha_{ij}) \in K_{\mathrm{exp}} && \quad \text{for all } i \neq j\\
                    &\quad& (1-P_{ij}, 1, \beta_{ij}) \in K_{\mathrm{exp}} && \quad \text{for all } i \neq j 
\end{alignedat}
\end{equation}

The cone program \cref{prob:creg} is the main subject of study in this paper. Moreover, it can be easily implemented with convex optimization interfaces such as CVXPY \cite{diamond2016cvxpy,agrawal2018rewriting} that rely on solvers that support both semidefinite and exponential cone programming such as SCS \cite{scs} and MOSEK \cite{mosek}. As we will see later, the dual conic program turns out to have an appealing interpretation. Finally, we remark that given the binary matrix $\bm{P}^*$ of \cref{prop:exact}, one approach (quite different from ours) that is particularly suited for clustering tasks is to attempt to factor $\bm{P}^*$ into the product of two low-rank binary matrices \cite{zhang2007binary}. However, since binary edge probability matrices are degenerate from the generative perspective, our cone program is more appropriate for the RDPG model. Furthermore, our method recovers richer geometrical structure than simple clusters.

\subsection{Related work}
Prior work on RDPG latent vector recovery has focused on spectral methods such as the \emph{adjacency spectral embedding} (ASE) and \emph{Laplacian spectral embedding} (LSE); see \cite{athreya2017statistical} for a comprehensive review. Assuming the latent vectors reside in $\RR^d$, the ASE entails taking the top-$d$ eigenvector approximation of the adjacency matrix $\bm{A}$. It is known that this provides a consistent estimate of the latent vectors, up to an orthogonal transformation \cite{tang2013universally}. Furthermore, asymptotic normality results exist for the ASE \cite{athreya2016limit} and LSE \cite{tang2018limit}. In addition to these theoretical results, these spectral techniques have been applied to graph machine learning, 
including vertex nomination \cite{yoder2020vertex,agterberg2020vertex}, vertex embedding and classification \cite{sussman2014consistent,chen2016robust}, and joint embedding \cite{wang2019joint} problems. 

One limitation of the ASE is the requirement that the embedding dimension $d$ is known in advance. In order to address these limitations, Yang et al. \cite{yang2020simultaneous} have recently proposed a framework for simultaneously discovering the rank and number of clusters for SBMs using an extended ASE and the Bayesian information criterion. We instead propose a cone program with an adjustable regularization hyperparameter, so that our method does not require \emph{a priori} knowledge of $d$. Empirically, we show that our method (which applies for general RDPGs) leads to a cross validation procedure for rank discovery. Furthermore, our method outperforms the vanilla ASE on clustering tasks and spectral norm distance, even when the true $d$ is known.

Maximum likelihood (ML) approaches to recovering matrix structure or solving graph problems are common in the literature. Arroyo et al. \cite{arroyo2018maximum} studied the graph matching problem in corrupted graphs via ML estimators. For the specific problem of latent vector recovery, Choi et al. \cite{choi2012stochastic} studied ML estimation for the SBM and showed that the when the number of blocks is $O(n^{1/2})$, the fraction of misclassified nodes converges in probability to 0. For latent vector graphs with a logistic sigmoid link function, O'Connor et al. \cite{o2015maximum} give a spectral approximation algorithm that is asymptotically equivalent to ML latent vector recovery, but do not directly study consistency results. Our cone program does not directly depend on any spectral algorithms, but we are still able to study consistency and likelihood properties. 

This work joins a body of literature that applies convex relaxation methods to broader statistical inference problems, such as Logistic PCA \cite{collins2002generalization,landgraf2015dimensionality}, angular synchronization \cite{singer2011angular}, and compressed sensing \cite{davenport20141, cai2013max}. In fact, semidefinite relaxations, which are special cases of cone programs, have been used to study SBM inference \cite{abbe2017community}. The convex relaxation method has also been fruitfully applied to low rank matrix recovery, by relaxing the nonconvex rank function to a nuclear norm or max norm \cite{davenport20141, cai2013max}. Standard statistical learning theory arguments are used to bound the generalization error. Equipped with our cone program relaxation, we demonstrate a novel application of tools from conic programming to the problem of RDPG inference.

Our proof technique for consistency bounds takes inspiration from the matrix completion literature, but some care must be taken because of the failure of a Lipschitz-like criterion.  Our approach also adds a trace penalty term rather than restricting the feasible set, which leads to looser requirements for the true $\bm{P}$ when formulating consistency results. 

\subsection{Main contributions}
We summarize our main contributions below. We lay down theoretical foundations for the RDPG inference problem using a convex cone programming approach. 
We believe that the inference and conic optimization perspective offers insights that have not been leveraged in the study of these latent vector models. Our analysis, which centers around the study of our cone program's optimality conditions, yields interesting theoretical properties of the inference problem. 

After taking the dual in \cref{sec:dual}, we use the primal-dual optimality conditions to derive an explicit relationship between the primal and dual optimal solutions $\bm{P}^*$ and $\bm{Q}^*$ (see \cref{prop:algebraic-rel,prop:main-bound}). Based on our bounds on the entries of $\bm{P}^*$ and $\bm{Q}^*$, we attempt to control their ranks. Unfortunately, we demonstrate a counterexample and a fundamental barrier to using our proof technique to prove that $\rank \bm{P}^*$ decreases arbitrarily as $C \to \infty$ (see \cref{prop:rank-barrier}). On the other hand, we see in \cref{subsec:likelihood,subsec:generalization} that our optimality bounds lead to asymptotic likelihood bounds and consistency bounds for a slightly modified model (see \cref{thm:likelihood-bound,thm:generalization}, respectively). 

In \cref{sec:experiments}, we show empirical evidence that our inference approach indeed recovers fine-grained latent vector geometry. Moreover, our algorithm's single regularization hyperparameter enables the use of simple cross-validation and a parameter sweep to \emph{discover} the rank of $\bm{P}^*$. Our cross-validation experiments provide strong evidence for uniqueness of the optimal regularization parameter and rank. Discussion and conclusions follow in \cref{sec:conclusions}.

\section{Dual cone program}\label{sec:dual}
In order to analyze the optimal solutions of \cref{prob:creg}, it is fruitful to analyze its dual program. Proofs of the following propositions are provided in \cref{app:a}. As a first step, we verify Slater's condition, which is sufficient for strong duality \cite{boyd_vandenberghe_2004}:
\begin{lemma}\label{lemma:slater}
The feasible region of \cref{prob:creg} has an interior point.
\end{lemma}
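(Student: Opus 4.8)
The plan is to exhibit an explicit feasible point of \cref{prob:creg} that lies in the interior of every cone constraint simultaneously. Recall that the constraints are: $\bm{P} \succeq 0$, and for each $i \neq j$ the membership $(P_{ij}, 1, \alpha_{ij}) \in K_{\mathrm{exp}}$ and $(1 - P_{ij}, 1, \beta_{ij}) \in K_{\mathrm{exp}}$. The natural candidate is to take all off-diagonal entries $P_{ij} = \tfrac{1}{2}$, a large diagonal value $P_{ii} = t$ for $t$ to be chosen, and then set $\alpha_{ij} = \beta_{ij} = \log\tfrac{1}{2} - \varepsilon$ for some small $\varepsilon > 0$.

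First I would check the exponential cone constraints. Since the middle coordinate is $y = 1 > 0$, the relevant branch of $K_{\mathrm{exp}}$ is $\{x \ge y\exp(z/y),\, y>0\}$, whose interior is $\{x > y\exp(z/y),\, y > 0\}$. With $x = \tfrac12$, $y = 1$, $z = \log\tfrac12 - \varepsilon$ we get $y\exp(z/y) = \tfrac12 e^{-\varepsilon} < \tfrac12 = x$, so the point is strictly inside; the same computation applies to $(1 - P_{ij}, 1, \beta_{ij}) = (\tfrac12, 1, \log\tfrac12 - \varepsilon)$. Second, I would handle the semidefinite constraint. Writing $\bm{P} = (t - \tfrac12)\bm{I} + \tfrac12 \bm{J}$ where $\bm{J}$ is the all-ones matrix (using that the off-diagonal-plus-chosen-diagonal pattern has this form when $P_{ii} = t$), the eigenvalues are $t - \tfrac12$ with multiplicity $n-1$ and $t - \tfrac12 + \tfrac n2$ once; choosing $t > \tfrac12$ makes $\bm{P}$ strictly positive definite, hence an interior point of the PSD cone. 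Finally I would note that the interior of the intersection is the intersection of the interiors (the cones act on disjoint blocks of coordinates, so no subtlety arises here), concluding that the constructed point is interior.

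The main obstacle is essentially bookkeeping rather than a genuine difficulty: one must be careful that $K_{\mathrm{exp}}$ is not full-dimensional issues aside, its interior really is the strict-inequality set with $y>0$ (the second piece $\{(x,0,z)\}$ is lower-dimensional and irrelevant here), and that "interior point of the feasible region of \cref{prob:creg}" means interior relative to the ambient space of all variables $(\bm{P}, \{\alpha_{ij}\}, \{\beta_{ij}\})$ — which is fine because $\alpha_{ij}, \beta_{ij}$ are free and $\bm{P}$ ranges over all symmetric matrices. I would also remark that this simultaneously verifies Slater's condition as needed for strong duality in \cref{sec:dual}.
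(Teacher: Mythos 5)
Your proposal is correct and takes essentially the same route as the paper: exhibit an explicit strictly feasible point with off-diagonal entries $P_{ij} = \tfrac{1}{2}$, a sufficiently large diagonal to force positive definiteness, and $\alpha_{ij}, \beta_{ij}$ strictly below $\log\tfrac{1}{2}$ so the exponential cone constraints hold strictly (the paper simply takes $\alpha_{ij} = \beta_{ij} = -1$ instead of $\log\tfrac12 - \varepsilon$). The extra bookkeeping you do about the interior of $K_{\mathrm{exp}}$ and the eigenvalues of $(t-\tfrac12)\bm{I} + \tfrac12\bm{J}$ is fine but not a different argument.
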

\begin{proof}
Set $\alpha_{ij} = \beta_{ij} = -1$ for all $i \neq j$, $P_{ij} = \frac{1}{2}$ for all $i \neq j$ and $\diag \bm{P} = \lambda$. For sufficiently large $\lambda$, $\bm{P}$ is positive definite and furthermore $-1 < \log \frac{1}{2}$, so Slater's condition holds.
\end{proof}
\begin{proposition}\label{prop:dual}
The following problem is the strong dual to \cref{prob:creg}:
\begin{equation}\tag{DREG}\label{prob:dual}
\begin{alignedat}{3}
\min              &\quad&  \sum_{i \sim j} s_{ij} + \sum_{i \not\sim j} v_{ij} + \sum_{i \neq j} u_{ij} &&          & \\
\text{subject to: } &\quad& s_{ij} \ge -1 - \log r_{ij}    && &\quad \text{ for all } i \sim j\\
                    &\quad&  v_{ij} \ge -1-\log u_{ij}     && &\quad \text{ for all } i \not\sim j\\
                    &\quad&  u_{ij} - r_{ij} = Q_{ij}     && &\quad \text{ for all } i \neq j \\
                    &\quad&  u_{ij}, r_{ij} \ge 0    && &\quad \text{ for all } i \neq j  \\
                   &\quad&  \diag \bm{Q} = C    && &  \\
                   &\quad&  \bm{Q} \succeq 0   && &
\end{alignedat}
\end{equation}
At optimality, \cref{prob:creg} and \cref{prob:dual} satisfy the following KKT conditions \cite{boyd_vandenberghe_2004}. Let $(\bm{P}^*, \alpha_{ij}^*, \beta_{ij}^*)$ and $(\bm{Q}^*, \bm{\lambda}_{ij}^*, \bm{\nu}_{ij}^*)$ be a pair of optimal primal and dual solutions to \cref{prob:creg} and \cref{prob:dual}, where $\bm{\lambda}_{ij}^* = (r_{ij}^*, s_{ij}^*, t_{ij}^*)$ and $\bm{\nu}_{ij}^* = (u_{ij}^*, v_{ij}^*, w_{ij}^*)$. Then for all $i$ and $j$ we have
\begin{alignat}{3}
\bm{0} &= \bm{P}^*\bm{Q}^* && \label{eq:kkt1}\\
0 &= \bm{\lambda}_{ij}^* \cdot (P_{ij}^*, 1, \alpha_{ij}^*) &&= r_{ij}^* P_{ij}^* + s_{ij}^* + t_{ij}^* \alpha_{ij}^* \label{eq:kkt2}\\
0 &= \bm{\nu}_{ij}^* \cdot (1-P_{ij}^*, 1, \beta_{ij}^*) &&= u_{ij}^* (1 - P_{ij}^*) + v_{ij}^* + w_{ij}^* \beta_{ij}^*.\label{eq:kkt3}
\end{alignat}
\end{proposition}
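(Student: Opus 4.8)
The plan is to obtain \cref{prob:dual} as the Lagrangian dual of \cref{prob:creg} regarded as a conic program, and then to read off the stated KKT relations from complementary slackness at an optimal primal--dual pair. First I would put \cref{prob:creg} in standard conic form: the decision variables are the symmetric matrix $\bm{P}$ and the scalars $\alpha_{ij},\beta_{ij}$ for $i\neq j$; in minimization form the objective is $C\tr\bm{P}-\sum_{i\sim j}\alpha_{ij}-\sum_{i\not\sim j}\beta_{ij}$; and the constraints are $\bm{P}$ in the (self-dual) cone of PSD matrices together with the affine--conic memberships $(P_{ij},1,\alpha_{ij})\in K_{\mathrm{exp}}$ and $(1-P_{ij},1,\beta_{ij})\in K_{\mathrm{exp}}$. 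Attaching a PSD multiplier $\bm{Q}$ to $\bm{P}\succeq0$ and triples $\bm{\lambda}_{ij}=(r_{ij},s_{ij},t_{ij})\in K_{\mathrm{exp}}^*$ and $\bm{\nu}_{ij}=(u_{ij},v_{ij},w_{ij})\in K_{\mathrm{exp}}^*$ to the two exponential--cone families, I would form the Lagrangian
\[
\mathcal{L}=C\tr\bm{P}-\sum_{i\sim j}\alpha_{ij}-\sum_{i\not\sim j}\beta_{ij}-\langle\bm{Q},\bm{P}\rangle-\sum_{i\neq j}\bm{\lambda}_{ij}\cdot(P_{ij},1,\alpha_{ij})-\sum_{i\neq j}\bm{\nu}_{ij}\cdot(1-P_{ij},1,\beta_{ij}).
\]

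Next I would compute the dual function by minimizing $\mathcal{L}$ over the unconstrained primal variables. Since $\mathcal{L}$ is affine in $(\bm{P},\alpha,\beta)$, the infimum is $-\infty$ unless the coefficient of each primal variable vanishes; equating these coefficients to zero gives (up to the usual symmetrization of the matrix inner product) exactly $\diag\bm{Q}=C$ from the $P_{ii}$, $u_{ij}-r_{ij}=Q_{ij}$ from the off-diagonal $P_{ij}$, $t_{ij}=-1$ for $i\sim j$ and $t_{ij}=0$ otherwise from the $\alpha_{ij}$, and $w_{ij}=-1$ for $i\not\sim j$ and $w_{ij}=0$ otherwise from the $\beta_{ij}$. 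On this stationary set the surviving constants collapse $\mathcal{L}$ to $-\sum_{i\neq j}(s_{ij}+v_{ij}+u_{ij})$, so maximizing the dual function is the same as minimizing $\sum_{i\neq j}(s_{ij}+v_{ij}+u_{ij})$ subject to those equations together with $\bm{Q}\succeq0$ and $\bm{\lambda}_{ij},\bm{\nu}_{ij}\in K_{\mathrm{exp}}^*$.

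It then remains to unwind $K_{\mathrm{exp}}^*$ from \cref{def:exp}. For $i\sim j$ the pinned value $t_{ij}=-1<0$ places $\bm{\lambda}_{ij}$ on the first branch of the dual cone, which after taking logarithms reads exactly $r_{ij}\ge0$ and $s_{ij}\ge-1-\log r_{ij}$; symmetrically, membership of $\bm{\nu}_{ij}=(u_{ij},v_{ij},-1)$ for $i\not\sim j$ gives $u_{ij}\ge0$ and $v_{ij}\ge-1-\log u_{ij}$. In the two inactive cases---$\bm{\lambda}_{ij}$ for non-edges and $\bm{\nu}_{ij}$ for edges---the third coordinate is pinned to $0$, so the dual cone reduces to ``first two coordinates nonnegative''; the leftover free scalars $s_{ij}$ ($i\not\sim j$) and $v_{ij}$ ($i\sim j$) then enter the objective with positive sign and no other coupling, so they may be set to $0$ at an optimum and dropped, while $r_{ij},u_{ij}\ge0$ remains for all $i\neq j$. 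Collecting the surviving terms reproduces \cref{prob:dual} verbatim. Finally, \cref{lemma:slater} supplies a Slater point, so strong duality holds and the KKT relations below are valid at any optimal primal--dual pair \cite{boyd_vandenberghe_2004}.

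For the KKT block, zero duality gap forces each complementary--slackness pairing to vanish at an optimal pair: $\bm{\lambda}_{ij}^*\cdot(P_{ij}^*,1,\alpha_{ij}^*)=0$ and $\bm{\nu}_{ij}^*\cdot(1-P_{ij}^*,1,\beta_{ij}^*)=0$ become \cref{eq:kkt2,eq:kkt3} once the dot products are expanded, while $\langle\bm{Q}^*,\bm{P}^*\rangle=\tr(\bm{P}^*\bm{Q}^*)=0$; since $\bm{P}^*,\bm{Q}^*\succeq0$ one has $0=\tr(\bm{P}^*\bm{Q}^*)=\|(\bm{P}^*)^{1/2}(\bm{Q}^*)^{1/2}\|_F^2$, which forces $(\bm{P}^*)^{1/2}(\bm{Q}^*)^{1/2}=\bm{0}$ and hence $\bm{P}^*\bm{Q}^*=\bm{0}$, i.e.\ \cref{eq:kkt1}. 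I expect the main obstacle to be the bookkeeping of the exponential--cone duality: reading $K_{\mathrm{exp}}^*$ off \cref{def:exp} correctly with the pinned third coordinates, and arguing away the two inactive coordinate directions, which would otherwise clutter \cref{prob:dual} with spurious second-branch constraints. Everything else is routine Lagrangian manipulation together with the standard positive-semidefinite fact used above.
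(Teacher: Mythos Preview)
Your proposal is correct and follows essentially the same route as the paper: form the Lagrangian with multipliers $\bm{Q}\succeq 0$ and $\bm{\lambda}_{ij},\bm{\nu}_{ij}\in K_{\mathrm{exp}}^*$, eliminate the primal variables by setting their (affine) coefficients to zero to obtain $\diag\bm{Q}=C$, $u_{ij}-r_{ij}=Q_{ij}$, and the pinned third coordinates $t_{ij},w_{ij}\in\{0,-1\}$, then read off the inequality constraints from the two branches of $K_{\mathrm{exp}}^*$ and discard the inactive $s_{ij}$ (non-edges) and $v_{ij}$ (edges) at optimality. The only noteworthy addition is that you spell out why $\langle\bm{Q}^*,\bm{P}^*\rangle=0$ upgrades to $\bm{P}^*\bm{Q}^*=\bm{0}$ via $\|(\bm{P}^*)^{1/2}(\bm{Q}^*)^{1/2}\|_F^2=0$; the paper simply cites the KKT conditions without making that step explicit.
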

We can further reduce the dual as follows:
\begin{lemma}\label{lemma:equiv}
The following relations hold at optimality:
\begin{enumerate}
    \item If $i \sim j$ and $Q_{ij}^* \le -1$, then $u_{ij}^* = 0$ and $r_{ij}^* = -Q_{ij}$.
    \item If $i \sim j$ and $Q_{ij}^* > -1$, then $u_{ij}^* = Q_{ij}^* + 1$ and $r_{ij}^* = 1$.
    \item If $i \not\sim j$ and $Q_{ij}^* < 1$, then $u_{ij}^* = 1$ and $r_{ij}^* = 1 - Q_{ij}^*$.
    \item If $i \not\sim j$ and $Q_{ij}^* \ge 1$, then $u_{ij}^* = Q_{ij}^*$ and $r_{ij}^* = 0$. 
\end{enumerate}
Imposing these relations explicitly, we arrive at the following reduced formulation, which agrees with \cref{prob:dual} at optimality:
\begin{equation}\tag{DCREG}\label{prob:dualopt}
\begin{alignedat}{3}
\min \quad             & \sum_{\substack{i \sim j \\ Q_{ij} \ge -1}} Q_{ij} +\sum_{\substack{i \sim j \\ Q_{ij} \le -1}} (-1 - \log(-Q_{ij})) + \sum_{\substack{i \not\sim j \\ Q_{ij} \ge 1}} (Q_{ij} - \log Q_{ij} - 1)\\ 
\text{subject to: } 
                   & \diag \bm{Q} = C  \\
                   &  \bm{Q} \succeq 0 
\end{alignedat}
\end{equation}
\end{lemma}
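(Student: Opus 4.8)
The plan is to treat \cref{prob:dual} as a partial minimization: fix any $\bm{Q}$ with $\diag\bm{Q}=C$ and $\bm{Q}\succeq 0$, and observe that the remaining variables decouple completely across the off‑diagonal index pairs, since the auxiliary variables $(s_{ij},v_{ij},u_{ij},r_{ij})$ attached to one pair $(i,j)$ interact neither with those of any other pair nor with the spectral constraints on $\bm{Q}$. So I would minimize over the auxiliary variables one pair at a time, in closed form, and then substitute the optimal values back in. This argument is stronger than what the lemma literally asks: it shows the relations (1)--(4) hold at the minimizer of the inner problem for \emph{every} feasible $\bm{Q}$, and (because each inner minimization is strictly convex, hence uniquely solved) at \emph{every} optimal solution; consequently \cref{prob:dualopt} has the same optimal value and the same optimal $\bm{Q}$ as \cref{prob:dual}.

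For a pair $i\sim j$, the portion of the dual objective depending on its variables is $s_{ij}+u_{ij}$, subject to $s_{ij}\ge -1-\log r_{ij}$, $u_{ij}-r_{ij}=Q_{ij}$ and $u_{ij},r_{ij}\ge 0$. The objective is increasing in $s_{ij}$ and the $r_{ij}$‑feasibility is independent of $s_{ij}$, so at optimality $s_{ij}=-1-\log r_{ij}$. Eliminating $u_{ij}=Q_{ij}+r_{ij}$, the pair contributes $\phi(r_{ij}):=r_{ij}-\log r_{ij}-1+Q_{ij}$, to be minimized over $r_{ij}\ge \max\{0,-Q_{ij}\}$ (the two sources of the bound being $r_{ij}\ge 0$ and $u_{ij}\ge 0$). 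Since $\phi$ is strictly convex and coercive on $(0,\infty)$ with unconstrained minimizer $r_{ij}=1$, the constrained minimizer is $r_{ij}^*=\max\{1,-Q_{ij}\}$: when $Q_{ij}\ge -1$ this gives $r_{ij}^*=1$, $u_{ij}^*=Q_{ij}+1$, contribution $\phi(1)=Q_{ij}$ (item (2)); when $Q_{ij}\le -1$ it gives $r_{ij}^*=-Q_{ij}$, $u_{ij}^*=0$, contribution $\phi(-Q_{ij})=-1-\log(-Q_{ij})$ (item (1)); the two agree at $Q_{ij}=-1$. A mirror‑image computation for a pair $i\not\sim j$ tightens $v_{ij}=-1-\log u_{ij}$, eliminates $r_{ij}=u_{ij}-Q_{ij}$, and minimizes $\psi(u_{ij}):=u_{ij}-\log u_{ij}-1$ over $u_{ij}\ge\max\{0,Q_{ij}\}$, yielding $u_{ij}^*=\max\{1,Q_{ij}\}$: items (3) and (4), with contributions $0$ when $Q_{ij}<1$ and $Q_{ij}-\log Q_{ij}-1$ when $Q_{ij}\ge 1$. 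Summing the pairwise contributions and discarding the ones that vanish ($i\not\sim j$ with $Q_{ij}<1$) reproduces exactly the objective of \cref{prob:dualopt}, while $\diag\bm{Q}=C$ and $\bm{Q}\succeq 0$ are carried over untouched.

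A few boundary checks are needed, none serious. The logarithms must stay finite: in every branch the argument is $r_{ij}^*$ or $u_{ij}^*$, which is either $1$ or $\lvert Q_{ij}\rvert\ge 1$, so bounded away from $0$; and in the branches where a variable is driven to $0$ (namely $u_{ij}^*=0$ in item (1) and $r_{ij}^*=0$ in item (4)) that variable appears in no logarithm, so nothing is violated and the objective remains finite. It is also worth confirming that (1)--(4) are consistent with the KKT stationarity/complementarity relations \cref{eq:kkt2,eq:kkt3} as an independent sanity check. The step I would be most careful about is making the separability/attainment argument airtight: one must verify that each inner minimization is actually attained (it is, by coercivity of $\phi$ and $\psi$ on their feasible rays) so that the partial minimization is an exact reduction rather than a mere inequality — this is precisely what licenses the statement that \cref{prob:dualopt} ``agrees with \cref{prob:dual} at optimality.''
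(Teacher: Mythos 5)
Your proposal is correct and follows essentially the same route as the paper: tighten $s_{ij}=-1-\log r_{ij}$ and $v_{ij}=-1-\log u_{ij}$ at optimality, decouple over the off-diagonal pairs, and solve the resulting one-dimensional convex minimizations branch by branch (the paper eliminates $r_{ij}$ in favor of $u_{ij}$ for edge pairs rather than the reverse, and invokes \cref{lemma:tech} for the stationary point, but these are cosmetic differences). Your explicit attention to attainment and the per-pair separability is a slightly more careful packaging of the same argument, not a different proof.
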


Using the complementary slackness conditions for the exponential cones, we obtain the following explicit relationships between $\bm{P}^*$ and $\bm{Q}^*$. 
\begin{proposition}\label{prop:algebraic-rel}
Let $\bm{P}^*$ and $\bm{Q}^*$ be corresponding optimal solutions to \cref{prob:creg} and \cref{prob:dualopt}, respectively. We have the following relationships between the off-diagonal entries of $\bm{P}^*$ and $\bm{Q}^*$: 
\begin{enumerate}
    \item If $i \sim j$ and $Q_{ij}^* \le -1$, then $Q_{ij}^*  = -\frac{1}{P_{ij}^*}$.
    \item If $i \sim j$ and $Q_{ij}^* > -1$, then $P_{ij}^* = 1$.
    \item If $i \not\sim j$ and $Q_{ij}^* < 1$, then $P_{ij}^* = 0$.
    \item If $i \not\sim j$ and $Q_{ij}^* \ge 1$, then $Q_{ij}^* = \frac{1}{1-P_{ij}^*}$. 
\end{enumerate}
\end{proposition}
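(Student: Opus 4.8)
The plan is to turn the four cases into short computations built from three ingredients: the complementary‑slackness identities \eqref{eq:kkt2}--\eqref{eq:kkt3}, the explicit values of $r_{ij}^*$ and $u_{ij}^*$ supplied by Lemma~\ref{lemma:equiv}, and the membership $\bm{\lambda}_{ij}^*,\bm{\nu}_{ij}^*\in K_{\mathrm{exp}}^*$. First I would pin down the third coordinates of the two cone multipliers. These are not listed among the dual variables of \eqref{prob:dual}, but they are determined by stationarity of the Lagrangian of \eqref{prob:creg} in the free scalars $\alpha_{ij},\beta_{ij}$: since $\alpha_{ij}$ has objective coefficient $1$ exactly when $i\sim j$ and $\beta_{ij}$ has objective coefficient $1$ exactly when $i\not\sim j$, one gets $t_{ij}^*=-\1[i\sim j]$ and $w_{ij}^*=-\1[i\not\sim j]$. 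Combined with the fact that $\alpha_{ij}^*$ (resp.\ $\beta_{ij}^*$) is driven to the edge of its exponential‑cone constraint whenever it carries weight in the objective, this also gives $\alpha_{ij}^*=\log P_{ij}^*$ for $i\sim j$ and $\beta_{ij}^*=\log(1-P_{ij}^*)$ for $i\not\sim j$, with the relevant $P_{ij}^*$ (resp.\ $1-P_{ij}^*$) strictly positive.

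Cases (ii) and (iii) are the easy ones. For (ii), $i\sim j$ gives $w_{ij}^*=0$, so $\bm{\nu}_{ij}^*=(u_{ij}^*,v_{ij}^*,0)\in K_{\mathrm{exp}}^*$ forces $u_{ij}^*,v_{ij}^*\ge 0$; then \eqref{eq:kkt3} reads $u_{ij}^*(1-P_{ij}^*)+v_{ij}^*=0$, a sum of nonnegative terms (using $P_{ij}^*\le 1$), so both vanish, and since Lemma~\ref{lemma:equiv} gives $u_{ij}^*=Q_{ij}^*+1>0$ we conclude $P_{ij}^*=1$. Case (iii) is the mirror image: $i\not\sim j$ gives $t_{ij}^*=0$, so $\bm{\lambda}_{ij}^*=(r_{ij}^*,s_{ij}^*,0)$ with $r_{ij}^*,s_{ij}^*\ge 0$, \eqref{eq:kkt2} becomes $r_{ij}^*P_{ij}^*+s_{ij}^*=0$, both terms vanish, and $r_{ij}^*=1-Q_{ij}^*>0$ forces $P_{ij}^*=0$.

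For cases (i) and (iv) the relevant cone multiplier has a nonzero (negative) third coordinate, hence lies in the smooth part of $K_{\mathrm{exp}}^*$. In case (i), $t_{ij}^*=-1<0$, membership in $K_{\mathrm{exp}}^*$ reads $s_{ij}^*\ge -1-\log r_{ij}^*$, and this inequality is tight at the optimum of \eqref{prob:dual} (it influences the objective only through $\sum s_{ij}$). Substituting $s_{ij}^*=-1-\log r_{ij}^*$ and $\alpha_{ij}^*=\log P_{ij}^*$ into \eqref{eq:kkt2} gives $r_{ij}^*P_{ij}^*=1+\log(r_{ij}^*P_{ij}^*)$, whose only positive solution is $r_{ij}^*P_{ij}^*=1$; with $r_{ij}^*=-Q_{ij}^*$ from Lemma~\ref{lemma:equiv} this is exactly $Q_{ij}^*=-1/P_{ij}^*$. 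Case (iv) runs identically with $\bm{\nu}_{ij}^*$ in place of $\bm{\lambda}_{ij}^*$: $w_{ij}^*=-1$, tightness of $v_{ij}^*\ge -1-\log u_{ij}^*$, and $\beta_{ij}^*=\log(1-P_{ij}^*)$ reduce \eqref{eq:kkt3} to $u_{ij}^*(1-P_{ij}^*)=1$, and $u_{ij}^*=Q_{ij}^*$ then yields $Q_{ij}^*=1/(1-P_{ij}^*)$.

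I expect the only real care to be needed in the first paragraph: extracting $t_{ij}^*$ and $w_{ij}^*$ cleanly from the Lagrangian (equivalently, identifying which face of $K_{\mathrm{exp}}$ the optimal primal point lies on) and handling the degenerate rays of $K_{\mathrm{exp}}$ and $K_{\mathrm{exp}}^*$ (the $y=0$ and $w=0$ pieces) so that the divisions $1/P_{ij}^*$ and $1/(1-P_{ij}^*)$ are legitimate. Once those bookkeeping points are settled, each case is the one‑line argument above, and the elementary fact that $t=1+\log t$ forces $t=1$ (for $t>0$) does the remaining work.
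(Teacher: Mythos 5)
Your proof is correct and follows essentially the same route as the paper's: complementary slackness for the exponential-cone multipliers (\cref{eq:kkt2,eq:kkt3}) combined with the explicit values of $r_{ij}^*,u_{ij}^*$ from \cref{lemma:equiv}, tightness of $\alpha_{ij}^*=\log P_{ij}^*$ and $\beta_{ij}^*=\log(1-P_{ij}^*)$, and the elementary uniqueness fact (\cref{lemma:tech}) forcing $r_{ij}^*P_{ij}^*=1$ and $u_{ij}^*(1-P_{ij}^*)=1$ in cases (i) and (iv). The only cosmetic difference is in cases (ii) and (iii), where you argue via nonnegativity of the terms in the slackness equation of the \emph{other} multiplier (whose third coordinate vanishes), whereas the paper uses the same equation \cref{eq:kkt2} to get $P_{ij}^*=1+\log P_{ij}^*$ and again invokes \cref{lemma:tech}; both are valid one-line arguments.
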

\begin{proof}
Suppose $i \sim j$ and $Q_{ij}^* < -1$. The KKT conditions \cref{eq:kkt2,eq:kkt3} together with \cref{lemma:equiv} imply that 
\begin{equation}\label{eq:kkt}
P_{ij}^*Q_{ij}^* + 1 + \log(-Q_{ij}^*) + \alpha_{ij}^* = 0.
\end{equation}
Recall from the definition of \cref{prob:creg} that $\alpha_{ij} \le \log P_{ij}$; at optimality this inequality becomes an equality. Applying \cref{lemma:tech}, we see that \cref{eq:kkt} has the unique solution $P_{ij}^*Q_{ij}^* = -1$, so $Q_{ij}^* = -\frac{1}{P_{ij}^*}$ as desired. On the other hand, if $Q_{ij}^* \ge -1$, then the KKT conditions yield $P_{ij}^* = \log P_{ij}^* + 1$. Thus \cref{lemma:tech} also yields $P_{ij}^* = 1$. 

The other two statements for $i \not\sim j$ follow similarly. 
\end{proof}
These explicit relationships between $\bm{P}^*$ and $\bm{Q}^*$  lead to bounds on the entries of $\bm{P}^*$, including the following corollary. 
\begin{corollary}\label{corollary:bound}
Suppose $C \ge 1$. If $i \sim j$, then $P_{ij}^* \ge \frac{1}{C}$, and if $i \not\sim j$, then $P_{ij}^* \le 1 - \frac{1}{C}$.
\end{corollary}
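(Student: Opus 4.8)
The plan is to feed an a priori bound on the off-diagonal entries of $\bm{Q}^*$ into the pointwise relations of \cref{prop:algebraic-rel}. The only fact not yet on hand is that $|Q_{ij}^*| \le C$ for all $i \neq j$, and this follows from semidefiniteness: since $\bm{Q}^* \succeq 0$, every $2 \times 2$ principal submatrix of $\bm{Q}^*$ indexed by $\{i,j\}$ is PSD, so its determinant $Q_{ii}^* Q_{jj}^* - (Q_{ij}^*)^2$ is nonnegative; because $\diag \bm{Q}^* = C$ in \cref{prob:dualopt}, this reads $(Q_{ij}^*)^2 \le C^2$, i.e. $-C \le Q_{ij}^* \le C$.

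Next I would split into the cases of \cref{prop:algebraic-rel}. Suppose $i \sim j$. If $Q_{ij}^* > -1$, then \cref{prop:algebraic-rel} gives $P_{ij}^* = 1 \ge \frac{1}{C}$, using $C \ge 1$. If instead $Q_{ij}^* \le -1$, then $P_{ij}^* = -1/Q_{ij}^*$, and since $Q_{ij}^* \in [-C,-1]$ the map $q \mapsto -1/q$ is positive and increasing on this interval, taking the value $\frac{1}{C}$ at $q = -C$; hence $P_{ij}^* \ge \frac{1}{C}$.

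The case $i \not\sim j$ is the mirror image. If $Q_{ij}^* < 1$, then $P_{ij}^* = 0$, and $0 \le 1 - \frac{1}{C}$ since $C \ge 1$. If $Q_{ij}^* \ge 1$, then \cref{prop:algebraic-rel} gives $1 - P_{ij}^* = 1/Q_{ij}^*$; with $Q_{ij}^* \in [1, C]$ we have $1/Q_{ij}^* \ge \frac{1}{C}$, so $P_{ij}^* = 1 - 1/Q_{ij}^* \le 1 - \frac{1}{C}$. There is no genuine obstacle here — the argument is essentially bookkeeping on top of \cref{prop:algebraic-rel} — and the only point worth checking is the boundary $Q_{ij}^* = -1$ (resp. $Q_{ij}^* = 1$), which lands in the first branch and yields $P_{ij}^* = 1$ (resp. $P_{ij}^* = 0$), consistent with both claimed inequalities.
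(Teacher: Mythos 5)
Your argument is correct and is essentially the paper's own proof: the paper likewise deduces $|Q_{ij}^*| \le C$ from $\bm{Q}^* \succeq 0$ with $\diag \bm{Q}^* = C$ and then invokes \cref{prop:algebraic-rel}. You have simply spelled out the $2\times 2$ minor computation and the case analysis that the paper leaves implicit.
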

\begin{proof}
Since $\bm{Q} \succeq 0$ and $\diag \bm{Q} = C$, it follows that $|Q_{ij}^*| \le C$. The conclusions then follow upon applying \cref{prop:algebraic-rel}.
\end{proof}

Note that $\bm{Q}$ can be decomposed as $\bm{Y}\bm{Y}^\top$, where $\bm{Y} \in \RR^{n \times k}$. Thus the rows of $\bm{Y}$ lie on the $(k-1)$-sphere with radius $\sqrt{C}$, and $Q_{ij}$ is the dot product between the $i$th and $j$th rows of $\bm{Y}$. By \cref{prop:algebraic-rel}, it follows that if $i \sim j$, the indices $(i,j)$ where $P_{ij}^* < 1$ correspond to the indices where $Q_{ij}^* \le -1$. Similarly, if $i \not\sim j$, the indices $(i,j)$ where $P_{ij}^* > 0$ correspond to the indices where $Q_{ij}^* \ge 1$.

\subsection{Connection to MAXCUT}\label{subsec:maxcut}
Inspecting \cref{prob:dualopt} and noting that $|Q_{ij}^*| \le C$, it follows for $C \le 1$ that we have the following problem with an objective that is homogeneous in $C$: 
\begin{equation}
\begin{alignedat}{3}
\min_{\bm{Q}}   &\quad&  \sum_{i \sim j} Q_{ij} &&          & \\
\text{subject to: } 
                   &\quad&  \diag \bm{Q} = C \le 1    && &  \\
                   &\quad&  \bm{Q} \succeq 0   && &
\end{alignedat}
\end{equation}
In fact, this is an instance of the SDP relaxation of \textsc{maxcut} with weight matrix equal to the adjacency matrix \cite{goemans1995improved}. This suggests that \cref{prob:creg} has deeper connections to classical combinatorial graph structure recovery problems. This also implies that the off-diagonal entries of the optimal $\bm{P}^*$ do not change in the region $C \le 1$. In particular, we recover $\bm{A}$ as in \cref{prop:exact}. 

\section{Main results}
\label{sec:main}
We now derive our major results on the behavior of optimal solutions to $\cref{prob:creg}$ and $\cref{prob:dual}$. Naturally, an optimal solution to the MAP problem will still tend towards larger probabilities where there are edges and smaller probabilities where there are nonedges. However, the penalty term also constrains the norms of the individual latent vectors.

These bounds on the inferred probability matrix $\bm{P}^*$ allow us to further study properties of the inference problem, such as consistency, in \cref{subsec:generalization}. Perhaps unsurprisingly, our bounds depend on node degrees and the regularization parameter $C$. In fact, we show that $P_{ij}^* = \Theta(1/C)$ for all $i$ and $j$, so the probabilities and latent vector lengths can be made arbitrarily small for sufficiently large $C$. These bounds follow solely from the complementary slackness conditions and basic facts about PSD matrices.
\begin{proposition}\label{prop:main-bound}
Suppose that $G = (V, E)$, and write $d_i \coloneqq \deg(i)$ for all $i \in V$. Then the following bounds hold.
\begin{alignat}{4}
\frac{1}{C\min_j d_j} &\le P_{ii}^* \le \frac{d_i}{C} \quad &&\forall i \label{eq:bd-Pii} \\
P_{ij}^* &\le \frac{\sqrt{d_id_j}}{C} \quad &&\forall i \neq j \label{eq:bd-Pij} \\
Q_{ij}^* &\le \min(d_i, d_j) - 1 \quad &&\forall i \neq j \label{eq:bd-Qij}
\end{alignat}
Additionally, if $C \ge \max_i d_i$, we have:
\begin{alignat}{4}
Q_{ij}^* &\le \frac{C}{C - \sqrt{d_id_j}} \quad &&\forall i \not\sim j \label{eq:bd-C-Qij} \\
P_{ii}^* &\ge \frac{d_i}{C} - \sum_{i \not\sim j} \frac{\sqrt{d_id_j}}{C(C-\sqrt{d_id_j})} \quad &&\forall i \label{eq:bd-C-Pii}
\end{alignat}
\end{proposition}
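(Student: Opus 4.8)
The plan is to derive all five bounds from one identity obtained by reading off the diagonal of the complementary-slackness condition \cref{eq:kkt1}. Since $\bm P^* \bm Q^* = \bm 0$, its $(i,i)$ entry is $\sum_j P_{ij}^* Q_{ij}^* = 0$, and the dual diagonal constraint $Q_{ii}^* = C$ turns this into
\[ C P_{ii}^* = - \sum_{j \neq i} P_{ij}^* Q_{ij}^* \qquad \text{for every } i. \]
Before using it, I would record what \cref{prop:algebraic-rel} and \cref{corollary:bound} say about each off-diagonal product. For $i \sim j$ we have $P_{ij}^* Q_{ij}^* \ge -1$: it equals $-1$ when $Q_{ij}^* \le -1$, and equals $Q_{ij}^* > -1$ when $P_{ij}^* = 1$. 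For $i \not\sim j$ we have $P_{ij}^* Q_{ij}^* \ge 0$: it is $0$ when $P_{ij}^* = 0$, and equals $Q_{ij}^* - 1 \ge 0$ when $Q_{ij}^* \ge 1$. In every case $P_{ij}^* \ge 0$; moreover $P_{ii}^* \ge 0$ and $|Q_{ij}^*| \le C$ since $\bm P^*, \bm Q^* \succeq 0$, and $P_{ij}^* \ge 1/C$ on edges.

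With these facts the bounds on $\bm P^*$ are short. The upper bound \cref{eq:bd-Pii} follows by bounding each edge term below by $-1$ and each non-edge term below by $0$ in the identity, giving $C P_{ii}^* \le d_i$. For \cref{eq:bd-Pij} and the lower bound in \cref{eq:bd-Pii} I would use the $2 \times 2$ minors of the PSD matrix $\bm P^*$, namely $(P_{ij}^*)^2 \le P_{ii}^* P_{jj}^*$: combined with $P_{jj}^* \le d_j / C$ this gives $P_{ij}^* \le \sqrt{d_i d_j}/C$, and for an edge $(i,j)$, additionally using $P_{ij}^* \ge 1/C$ gives $P_{ii}^* \ge 1/(C d_j)$, optimized over the neighbors of $i$.

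For \cref{eq:bd-Qij} I would split the pair $(i,j)$ off from the identity for vertex $i$: $0 = C P_{ii}^* + P_{ij}^* Q_{ij}^* + \sum_{k \neq i,j} P_{ik}^* Q_{ik}^*$. When $i \sim j$ and $Q_{ij}^* > -1$ the middle term is exactly $Q_{ij}^*$, the remaining $d_i - 1$ neighbor terms contribute at least $-(d_i - 1)$, the non-neighbor terms are nonnegative, and $C P_{ii}^* \ge 0$; hence $Q_{ij}^* \le d_i - 1$, while $Q_{ij}^* \le -1$ is even smaller, and symmetry in $i \leftrightarrow j$ gives the $\min$. The last two bounds exploit that a large penalty saturates the edges: if $P_{ij}^* = 1$ on some edge then $P_{ii}^* P_{jj}^* \ge 1$, so $C^2 \le d_i d_j \le (\max_i d_i)^2$; thus once $C \ge \max_i d_i$ (away from the degenerate equality case) every edge is in case 1 of \cref{prop:algebraic-rel} with $P_{ij}^* Q_{ij}^* = -1$, and the identity becomes $C P_{ii}^* = d_i - \sum_{j \not\sim i} P_{ij}^* Q_{ij}^*$. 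For \cref{eq:bd-C-Qij}, case 4 gives $P_{ij}^* = 1 - 1/Q_{ij}^*$ when $Q_{ij}^* \ge 1$; plugging in $P_{ij}^* \le \sqrt{d_i d_j}/C$ and solving yields $Q_{ij}^* \le C/(C - \sqrt{d_i d_j})$ (the case $Q_{ij}^* < 1$ is trivial, the bound exceeding $1$). Then each non-edge term satisfies $P_{ij}^* Q_{ij}^* \le Q_{ij}^* - 1 \le \sqrt{d_i d_j}/(C - \sqrt{d_i d_j})$, and substituting into the identity and dividing by $C$ gives \cref{eq:bd-C-Pii}.

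I expect the obstacle to be bookkeeping rather than ideas: carefully justifying the per-term dichotomies from \cref{prop:algebraic-rel} including the boundary values $Q_{ij}^* = \pm 1$, getting the degree counts right when a single pair is split off, and treating the boundary case $C = \max_i d_i$ (where $C - \sqrt{d_i d_j}$ may vanish) in the last two bounds. The PSD-minor estimates and the algebra of solving $1 - 1/Q = P$ for $Q$ are routine.
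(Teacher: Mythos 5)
Your proposal is correct and follows essentially the same route as the paper's proof: read off the diagonal of $\bm{P}^*\bm{Q}^* = \bm{0}$ with $Q_{ii}^* = C$, use the case dichotomies from \cref{prop:algebraic-rel} and \cref{corollary:bound} to see that edge terms are $\ge -1$ and non-edge terms are $\ge 0$, apply $2\times 2$ PSD minors for \cref{eq:bd-Pij} and the lower bound in \cref{eq:bd-Pii}, and for $C \ge \max_i d_i$ use that $P_{ij}^* < 1$ forces each edge term to equal $-1$ while the non-edge terms are bounded via \cref{eq:bd-Pij}. Your extra care with the boundary case $C = \max_i d_i$ and your optimization of the $P_{ii}^*$ lower bound over the neighbors of $i$ are only bookkeeping refinements of the same argument.
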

\begin{proof}
The complementary slackness condition $\bm{P}^*\bm{Q}^* = \bm{0}$ (see \cref{prop:dual}) furnishes us with $n^2$ equations. Focusing on the diagonal entries and applying \cref{prop:algebraic-rel} provides crude bounds on the entries of $\bm{P}^*$ and $\bm{Q}^*$. 

In particular, from $\diag(\bm{P}^*\bm{Q}^*) = \bm{0}$, we obtain for each $i$
\begin{equation}\label{eq:comp}
CP_{ii}^* + \sum_{i \sim j} \max(-1, Q_{ij}^*) + \sum_{i \not\sim j} \frac{P_{ij}^*}{1-P_{ij}^*} = 0.
\end{equation}
Since $\frac{P_{ij}^*}{1-P_{ij}^*} \ge 0$, and the sum over $i \sim j$ has $d_i$ terms, we have $P_{ii}^* \le \frac{d_i}{C}$,
which proves the second inequality of \cref{eq:bd-Pii}.
Since the first and third terms of \cref{eq:comp} are nonnegative, the sum over $i \sim j$ must be nonpositive, so $Q_{ij}^* \le d_i - 1$. \cref{eq:bd-Qij} follows by symmetry.

In a PSD matrix, every $2 \times 2$ submatrix is PSD, so we have 
\[
P_{ij}^* \le \sqrt{P_{ii}^*P_{jj}^*} \le \frac{\sqrt{d_id_j}}{C},
\]
which is \cref{eq:bd-Pij}. On the other hand, we have 
\[ P_{ii}^* \ge \max_j \frac{(P_{ij}^*)^2}{P_{jj}^*} \ge \max_j \frac{C(P_{ij}^*)^2}{d_j} \ge \frac{1}{C\min_j d_j},\]
where the last inequality follows from \cref{corollary:bound}. Hence the first inequality of \cref{eq:bd-Pii} holds.

Now suppose $C \ge \max d_i$. For \cref{eq:bd-C-Qij}, note that for $i \not\sim j$ we have $Q_{ij}^* \le \frac{1}{1-P_{ij}^*}$ by \cref{prop:algebraic-rel}. Because $C \ge \max d_i$, $P_{ij}^* \le 1$ by \cref{eq:bd-Pij}. Hence $Q_{ij}^* \le \frac{C}{C - \sqrt{d_id_j}}$ as desired.
Finally, substituting \cref{eq:bd-Pij} into \cref{eq:comp} and noting again that the denominator in $\cref{eq:comp}$ is nonnegative, we have 
\[
0 \le CP_{ii}^* - d_i + \sum_{i \not\sim j} \frac{\sqrt{d_id_j}}{C-\sqrt{d_id_j}},
\]
and rearranging yields \cref{eq:bd-C-Pii}.
\end{proof}

A couple points are worth noting. If node $i$ is isolated, i.e. $d_i = 0$, then the learned latent vector is the zero vector. More generally, the maximum size of $P_{ij}$ is controlled by the connectivity of nodes $i$ and $j$. Finally, we immediately obtain the following corollary characterizing the trace of $\bm{P}^*$. 
\begin{corollary}\label{corollary:trace}
Let $m = |E|$ be the number of edges in $G$. We have 
\begin{equation}
0 \le \tr \bm{P}^* \le \frac{2m}{C}. \label{eq:bd-trP}    
\end{equation}
Furthermore, if $C > n$ then 
\begin{equation}
\tr \bm{P}^* \ge \frac{2m}{C} - \frac{n^3}{C(C-n)}. \label{eq:bd-C-trP}
\end{equation}
\end{corollary}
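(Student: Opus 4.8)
The plan is to derive both estimates by summing the per-node bounds of \cref{prop:main-bound} over all vertices and invoking the handshake lemma $\sum_i d_i = 2m$, so that no new ideas beyond \cref{prop:main-bound} are needed.

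For the upper bound in \cref{eq:bd-trP}, I would start from the second inequality of \cref{eq:bd-Pii}, namely $P_{ii}^* \le d_i/C$, and sum over $i$ to get $\tr \bm{P}^* = \sum_i P_{ii}^* \le \frac{1}{C}\sum_i d_i = \frac{2m}{C}$. The accompanying lower bound $\tr \bm{P}^* \ge 0$ is immediate: $\bm{P}^* \succeq 0$ forces each diagonal entry to be nonnegative (equivalently, one may cite the first inequality of \cref{eq:bd-Pii}).

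For the sharper lower bound \cref{eq:bd-C-trP}, I would first observe that $C > n$ implies $C > n - 1 \ge \max_i d_i$, so the hypothesis $C \ge \max_i d_i$ needed for \cref{eq:bd-C-Pii} holds. Summing \cref{eq:bd-C-Pii} over all $i$ and again using $\sum_i d_i = 2m$ gives
\[
\tr \bm{P}^* \;\ge\; \frac{2m}{C} \;-\; \sum_i \sum_{i \not\sim j} \frac{\sqrt{d_id_j}}{C\bigl(C-\sqrt{d_id_j}\bigr)}.
\]
It remains to bound the double sum crudely: since $\sqrt{d_id_j} \le n$ for every pair, $C - \sqrt{d_id_j} \ge C - n > 0$, so each summand is at most $\tfrac{n}{C(C-n)}$; and there are fewer than $n^2$ ordered pairs $(i,j)$ with $i \not\sim j$, whence the double sum is at most $\tfrac{n^3}{C(C-n)}$. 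Substituting this estimate yields \cref{eq:bd-C-trP}.

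The argument is essentially bookkeeping on top of \cref{prop:main-bound}; the only place requiring any care is the coarse counting-and-size estimate used to collapse the double sum into the clean term $n^3/(C(C-n))$, which is also the source of whatever slack the stated constant contains.
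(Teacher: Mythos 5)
Your proposal is correct and follows essentially the same route as the paper: sum the diagonal bounds of \cref{prop:main-bound} over $i$ with $\sum_i d_i = 2m$ for \cref{eq:bd-trP}, and obtain \cref{eq:bd-C-trP} from \cref{eq:bd-C-Pii} by bounding $\sqrt{d_id_j} \le n$ and crudely counting the non-edge pairs. The only difference is that you spell out the counting step that the paper leaves implicit.
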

\begin{proof}
Sum \cref{eq:bd-Pii} over $i$, using the fact that $\sum_i d_i = 2m$, to obtain \cref{eq:bd-trP}.
\cref{eq:bd-C-trP} follows from \cref{eq:bd-C-Pii}, noting that $\sqrt{d_id_j} \le n$. 
\end{proof}

\begin{remark}\label{remark:signless}
With $C=1$, as discussed in \cref{subsec:maxcut}, the dual cone program reduces to \textsc{maxcut}, and the corollary gives $\tr \bm{P}^* \le 2m$. In this case, we can construct an explicit optimal solution to \cref{prob:mle} that saturates the trace bound for $C=1$. Indeed, the signless Laplacian $\bm{L} \coloneqq \bm{D} + \bm{A}$, where $\bm{D} = \diag(d_i)$, is PSD, its off-diagonal entries are those of $\bm{A}$, and $\tr \bm{L} = 2m$ \cite{cvetkovic2007signless}. Also for $C \le 1$, the discussion in \cref{subsec:maxcut} implies $P^*_{ij} = A_{ij}$ for $i \neq j$.
\end{remark} 
\subsection{Rank of optimal solutions} Since our problem is aimed at recovering low rank solutions, it behooves us to study the rank of optimal solutions. We propose a strategy for analyzing the rank, partly motivated by our bounds on the trace of optimal solutions. We construct an example where $\rank \bm{P}^* = \frac{n}{2}$; hence, no \emph{deterministic} bound can show that $\rank \bm{P}^* < \frac{n}{2}$, although a \emph{probabilistic} rank bound may still hold. Furthermore, we establish that in the regime where $C > n$, the naive trace-rank inequality furnished by the convex envelope relationship can only show that $\rank \bm{P}^* \ge \frac{n}{2}$. Hence, the trace-rank inequality gives the best possible deterministic bound in that regime. In doing so, we narrow the possible hypotheses for a positive rank reduction theorem. 

The relevant optimality condition for studying the rank comes from the PSD constraint that $\bm{P}^*\bm{Q}^* = \bm{0}$. It follows that $\rank \bm{P}^* + \rank \bm{Q}^* \le \rank \bm{P}^* + \dim \Nul \bm{P}^* = n$. Therefore, we can upper-bound $\rank \bm {P}^*$ by lower-bounding $\rank \bm{Q}^*$. It is natural to formulate bounds using the trace, as the trace is the convex envelope of rank. More precisely, for any $\bm{M} \in \RR^{n \times n}$, $\rank \bm{M} \ge \tr \bm{M}/\norm{\bm{M}}$. Conveniently, $\tr \bm{Q}^* = nC$ by construction, so it remains to upper-bound the operator norm $\norm{\bm{Q}^*}$. To that end, we hope to apply the Gershgorin disks theorem.

However, we now show that for sufficiently large $C$, this method cannot show that $\rank \bm{Q}^* > \frac{n}{2}$. 
\begin{proposition}\label{prop:rank-barrier}
Let $\bm{Q}^*$ be an optimal solution to \cref{prob:dual}. Then for any $C > \max d_i$, the tightest bound offered by Gershgorin is $\rank \bm{Q}^* \ge \frac{n}{2}$. Furthermore, this bound is tight.
\end{proposition}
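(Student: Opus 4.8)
The plan is to exhibit one graph and a matching optimal dual solution for which the Gershgorin estimate of $\norm{\bm{Q}^*}$ equals exactly $2C$ and for which $\rank\bm{Q}^* = n/2$; this simultaneously shows that the trace/Gershgorin argument yields nothing beyond $\rank\bm{Q}^* \ge \tr\bm{Q}^*/\norm{\bm{Q}^*} = nC/(2C) = n/2$ and that this conclusion cannot be improved. Recall that Gershgorin bounds $\norm{\bm{Q}^*} \le \max_i\bigl(Q_{ii}^* + \sum_{j\ne i}|Q_{ij}^*|\bigr) = C + \max_i\sum_{j\ne i}|Q_{ij}^*|$, so the method could beat $n/2$ only if one could always guarantee $\max_i\sum_{j\ne i}|Q_{ij}^*| < C$; the example rules this out by attaining equality.

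Take $n$ even and let $G$ be a perfect matching, so every $d_i = 1$ and any $C > \max_i d_i = 1$ is in range. The reduced dual \cref{prob:dualopt} separates over the matched pairs and the non-edges; for a matched pair $i\sim j$ the only relevant term is $-1-\log(-Q_{ij})$, which strictly decreases in $-Q_{ij}$, while positive semidefiniteness of the $2\times2$ principal submatrix forces $|Q_{ij}| \le \sqrt{Q_{ii}Q_{jj}} = C$. Hence an optimal $\bm{Q}^*$ takes $Q_{ij}^* = -C$ on each edge; choosing the matched directions mutually orthogonal sends the remaining (non-edge) entries to $0$ at no cost, so one optimal solution is the block-diagonal matrix $\bm{Q}^* = \bigoplus_{k=1}^{n/2}\left(\begin{smallmatrix}C & -C\\ -C & C\end{smallmatrix}\right)$. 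This is consistent with \cref{prop:algebraic-rel}: $Q_{ij}^* = -C \le -1$ forces $P_{ij}^* = -1/Q_{ij}^* = 1/C$, and $\bm{P}^* = \bigoplus_{k=1}^{n/2}\tfrac1C\left(\begin{smallmatrix}1&1\\1&1\end{smallmatrix}\right)$ indeed satisfies the KKT condition $\bm{P}^*\bm{Q}^* = \bm{0}$ of \cref{prop:dual}.

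For this $\bm{Q}^*$ every row has exactly one nonzero off-diagonal entry, of magnitude $C$, so all Gershgorin disks equal the interval $[0,2C]$; Gershgorin therefore certifies $\norm{\bm{Q}^*} \le 2C$ and nothing sharper, which is the best possible since each $2\times2$ block has rank $1$, giving $\norm{\bm{Q}^*} = 2C$ and $\rank\bm{Q}^* = n/2$ exactly. Thus the trace/Gershgorin argument delivers precisely $\rank\bm{Q}^* \ge n/2$ here and cannot be pushed further, so in particular it cannot certify $\rank\bm{Q}^* > n/2$ in general; and the bound is tight because $\rank\bm{Q}^* = n/2$ is attained. As a sanity check, the same example has $\rank\bm{P}^* = n/2$ regardless of $C$, matching the counterexample promised in the introduction.

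I expect the main obstacle to be pinning down a genuine optimal dual solution of rank exactly $n/2$: the dual optimum for the matching is not unique (the directions of non-interacting edges are free, and for small $C$ some non-orthogonal configurations give optima of rank below $n/2$), so the proof must either single out the orthogonal representative or phrase the claim as the existence of an optimal $\bm{Q}^*$ on which Gershgorin is tight at $n/2$. A secondary point is making precise what ``the tightest bound offered by Gershgorin'' means --- namely that, because this instance attains $\norm{\bm{Q}^*} = 2C$, no Gershgorin-type inequality can certify $\norm{\bm{Q}^*} < 2C$ uniformly, so $\rank\bm{Q}^* \ge n/2$ is the strongest conclusion the method supports.
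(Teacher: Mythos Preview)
Your tightness example (the perfect matching) is the same one the paper uses, and your direct computation of $\bm{Q}^*$ from the dual---rather than the paper's route of computing $\bm{P}^*$ from \cref{prop:main-bound} and \cref{eq:comp}---is a perfectly good alternative that lands on the same pair $\bm{P}^*,\bm{Q}^*$.

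However, you are missing the first half of the proposition. The claim is not merely that the Gershgorin approach fails to beat $n/2$ on \emph{some} instance; it is that for \emph{every} graph $G$ and every $C > \max_i d_i$, the Gershgorin lower bound on $\rank\bm{Q}^*$ is at most $n/2$. Your single example only shows that no \emph{uniform} improvement over $n/2$ is possible, which is what you say explicitly (``cannot certify $\rank\bm{Q}^* > n/2$ in general''). That is a strictly weaker statement.

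The paper's argument for the general part is short but essential. Whenever $C > \max_i d_i$, \cref{prop:main-bound} gives $P_{ij}^* \le \sqrt{d_id_j}/C < 1$ for every edge $i\sim j$, so \cref{prop:algebraic-rel} forces $|Q_{ij}^*| = 1/P_{ij}^* \ge C/\sqrt{d_id_j}$. Thus even under the most favorable assumptions for Gershgorin---non-edge entries set to zero and edge entries at their minimum---the off-diagonal row sum at a vertex $i$ of maximum degree satisfies $\sum_{j:i\sim j} C/\sqrt{d_id_j} \ge d_i \cdot C/d_i = C$, since every neighbor $j$ has $d_j \le d_i$. The Gershgorin radius is therefore always at least $2C$, and the trace/Gershgorin bound $nC/\norm{\bm{Q}^*}$ can never exceed $n/2$, regardless of the graph. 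This per-instance limitation is what the proposition asserts; your matching example then shows it is sharp.
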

\begin{proof}
If $C > \max d_i$, then \cref{prop:main-bound} implies that $P_{ij}^* < 1$ for all $i \neq j$. Therefore for $i \sim j$, \cref{prop:algebraic-rel} implies that $|Q_{ij}^*| \ge \frac{C}{\sqrt{d_id_j}}$. Even if we assume that $Q_{ij}^* = 0$ for all $i \not\sim j$ and $|Q_{ij}^*| = \frac{C}{\sqrt{d_id_j}}$ for all $i \sim j$, Gershgorin gives
\[
\rank \bm{Q}^* \ge \frac{nC}{C + \max d_i \cdot \frac{C}{\min d_i}} \ge \frac{n}{2}. 
\]

To see that this bound is tight, consider the graph consisting of $\frac{n}{2}$ disconnected 2-cliques. From \cref{prop:main-bound}, $P_{ii}^* = \frac{1}{C}$ for all $i$. Similarly, $P_{ij}^* = \frac{1}{C}$ for $i \sim j$, and \cref{eq:comp} implies that $P_{ij}^* = 0$ for $i \not\sim j$. Then the rank of $\bm{P}^*$ is clearly $\frac{n}{2}$. 
\end{proof}

\begin{remark}\label{remark:rank-barrier}
The phenomenon of vanishing entries of $\bm{P}^*$ posing an obstruction to rank results can be generalized as follows. For $0 \le p \le 1$, let $\Lambda(p)$ be the set of pairs $(i, j)$ with $i \neq j$ such that $P_{ij}^* = p$. Form the complement of the induced subgraph of $\Lambda(0)$. For any positive $k$, if this complement graph contains a $k$-clique, then $\rank \bm{P}^* \ge k$, as there are at least $k$ mutually orthogonal latent vectors. In \cref{subsec:likelihood,subsec:generalization} we will see how vanishing entries are also troublesome for consistency results.
\end{remark}

\subsection{Likelihood bounds}\label{subsec:likelihood}
Armed with the results of previous sections, we hope to study consistency results. We argue that for \cref{prob:creg}, it is natural to formulate consistency results in terms of the Kullback-Leibler (KL) divergence. However, we find that the KL divergence is not well behaved. Instead, we are able to show that the (closely related) difference in Bernoulli log likelihoods of the true $\bm{P}$ and the inferred $\bm{P}^*$ is $O(m\log C)$ with high probability.

To begin the discussion, we define the following notation and link the Bernoulli likelihood function to the KL divergence away from the diagonal.
\begin{definition}
The Bernoulli likelihood given an adjacency matrix $\bm{A}$ is defined as 
\[
L_{\bm{A}}(\bm{M}) = \sum_{i \neq j} A_{ij}\log(M_{ij}) + (1-A_{ij})\log(1-M_{ij}).
\]
This is precisely the objective function in \cref{prob:mle}. 
We denote the regularized objective function by
\[
\mathscr{L}_{\bm{A}, C}(\bm{M}) \coloneqq L_{\bm{A}}(\bm{M}) - C\tr(\bm{M}).
\]
\end{definition}
\begin{definition}
We define the \emph{matrix KL-divergence} between $\bm{X}$ and $\bm{Y} \in \RR^{n \times n}$ to be the mean off-diagonal entrywise divergence---that is,
\[
D(\bm{X} \parallel \bm{Y}) = -\frac{1}{n(n-1)} \sum_{i \neq j} \left[ X_{ij} \log\left(\frac{Y_{ij}}{X_{ij}}\right) + (1-X_{ij}) \log\left(\frac{1-Y_{ij}}{1-X_{ij}}\right)\right]. 
\]
\end{definition}

We define the quantity only on the off-diagonal entries because the diagonal entries of $\bm{P}$ do not carry probabilistic meaning. For a fixed $\bm{M}$, the definitions imply that 
\begin{equation} D(\bm{P} \parallel \bm{M}) = -\frac{1}{n(n-1)} \EE[L_{\bm{A}}(\bm{P}) - L_{\bm{A}}(\bm{M})], \end{equation}
where the expectation is taken with respect to the random adjacency matrix $\bm{A}$ distributed according to $\bm{P}$. All subsequent expectations, unless noted otherwise, are also taken with respect to $\bm{A}$.

Let $\bm{A}_0$ be sampled i.i.d. from $\bm{P}$. Denote by $\bm{P}^*_{\bm{A}_0, C}$ the optimal solution to \cref{prob:creg} with realization $\bm{A}_0$ and hyperparameter $C$. We hope to develop an upper bound on $D(\bm{P} \parallel \bm{P}^*_{\bm{A}_0, C})$. We first claim that it is impossible to obtain a finite deterministic bound on this quantity. Suppose that $0 < P_{ij} < 1$ for all $i \neq j$, so that $\bm{A}_0$ is the empty graph with positive probability. Assuming $\bm{A}_0$ is indeed the empty graph and $\bm{A}$ is nonempty, it follows that $L_{\bm{A}}(\bm{P}^*_{\bm{A}_0. C}) = -\infty$. However, we can still hope for a probabilistic bound. For example, if one could identify conditions on $\bm{P}$ under which $\frac{1}{n} \le P_{ij} \le 1 - \frac{1}{n}$ with high probability, then the results of \cref{subsec:generalization} apply.

Nevertheless, it is clear that the likelihood function of \cref{prob:creg} merits its own study. We present our main result in this direction below; the details of the proof are provided in \cref{app:b}. The upshot is that with high probability, the likelihood difference $L_{\bm{A}}(\bm{P}) - L_{\bm{A}}(\bm{P}^*)$ is $O(m\log C)$.

\begin{theorem}\label{thm:likelihood-bound}
Let $\bm{A} \sim \bm{P} \in \RR^{n \times n}$ and let $C \ge 1$. Then we have
\begin{equation}\label{ineq:kl-bound}
L_{\bm{A}}(\bm{P}) - L_{\bm{A}}(\bm{P}^*) \le C\tr\bm{P} - C\tr\bm{P}^*.
\end{equation}
Furthermore, for $\eta > 0$ and $C > \max d_i$, we have
\begin{equation}\label{ineq:likelihood-lower}
    L_{\bm{A}}(\bm{P}) - L_{\bm{A}}(\bm{P}^*) \ge -\eta - H[\bm{P}] + 2m\log C - \sum_i d_i\log d_i,
\end{equation}
and 
\begin{equation}\label{ineq:likelihood-upper}
    L_{\bm{A}}(\bm{P}) - L_{\bm{A}}(\bm{P}^*) \le \eta - H[\bm{P}] + 2m\log C + 2m,
\end{equation}
with probability at least $1 - 4\exp(-\eta^2/(8n(n-1) + 4\eta))$.
Here, we have defined $H[\bm{P}] \coloneqq -\sum_{i \neq j} P_{ij}\log P_{ij}$, the entropy of the edge probability matrix.
\end{theorem}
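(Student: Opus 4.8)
The plan is to prove the three inequalities separately. Inequality~\cref{ineq:kl-bound} is immediate from optimality: the true matrix $\bm{P}=\bm{X}\bm{X}^\top$ is feasible for \cref{prob:creg} (it is PSD, its off-diagonal entries lie in $[0,1]$, and taking $\alpha_{ij}=\log P_{ij}$, $\beta_{ij}=\log(1-P_{ij})$ satisfies the exponential-cone constraints; the degenerate cases $P_{ij}\in\{0,1\}$ either make $L_{\bm{A}}(\bm{P})=-\infty$, rendering the inequality trivial, or are recovered by a limiting argument), and at this point the objective of \cref{prob:creg} equals $\mathscr{L}_{\bm{A},C}(\bm{P})$, so optimality of $\bm{P}^*$ gives $\mathscr{L}_{\bm{A},C}(\bm{P}^*)\ge\mathscr{L}_{\bm{A},C}(\bm{P})$, which rearranges to \cref{ineq:kl-bound}. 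For \cref{ineq:likelihood-lower,ineq:likelihood-upper} the strategy is to sandwich $L_{\bm{A}}(\bm{P}^*)$ by deterministic bounds coming from the structural results of \cref{sec:dual,sec:main}, to control $L_{\bm{A}}(\bm{P})$ by concentration around its mean, and then combine.

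First I would show that for $C>\max_i d_i$ (so that $P_{ij}^*<1$ by \cref{eq:bd-Pij} and all logarithms below are finite) one has $-2m\log C-2m\le L_{\bm{A}}(\bm{P}^*)\le -2m\log C+\sum_i d_i\log d_i$. Split $L_{\bm{A}}(\bm{P}^*)=\sum_{i\sim j}\log P_{ij}^*+\sum_{i\not\sim j}\log(1-P_{ij}^*)$. For the edge terms, \cref{corollary:bound} and \cref{eq:bd-Pij} give $-\log C\le\log P_{ij}^*\le\tfrac12\log(d_id_j)-\log C$, and summing over the $2m$ ordered edge-pairs (vertex $i$ occurring $d_i$ times) produces the $\mp 2m\log C$ main term and the $\sum_i d_i\log d_i$ correction. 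For the non-edge terms, $\log(1-P_{ij}^*)\le 0$, while $-\log(1-x)\le x/(1-x)$ together with the diagonal complementary-slackness identity \cref{eq:comp} --- which yields $\sum_{j:\,i\not\sim j}P_{ij}^*/(1-P_{ij}^*)\le d_i$ for each $i$, hence $\le 2m$ in total --- gives $\sum_{i\not\sim j}\log(1-P_{ij}^*)\ge -2m$.

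For $L_{\bm{A}}(\bm{P})$, write it as $2\sum_{i<j}X_{ij}$ with $X_{ij}=A_{ij}\log P_{ij}+(1-A_{ij})\log(1-P_{ij})$ independent, noting that $\EE X_{ij}$ is minus the Bernoulli entropy of $P_{ij}$, so $\EE L_{\bm{A}}(\bm{P})=-H[\bm{P}]$; the target is $\lvert L_{\bm{A}}(\bm{P})+H[\bm{P}]\rvert\le\eta$ with the stated probability, which combined with the sandwich immediately yields \cref{ineq:likelihood-upper} (via the deterministic lower bound on $L_{\bm{A}}(\bm{P}^*)$) and \cref{ineq:likelihood-lower} (via the upper bound). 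The upper-deviation half is routine: after centering, the edge summand exceeds its mean by at most $-P_{ij}\log P_{ij}\le 1/e$ and the non-edge summand similarly, so each of the edge-sum and the non-edge-sum is a bounded-above, bounded-variance sum of independent terms, to which a Bernstein/Bennett estimate applies directly; treating the edge-sum and the non-edge-sum separately supplies two of the four tail events in the final union bound.

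The step I expect to be the main obstacle is the lower-deviation half, needed for \cref{ineq:likelihood-lower}: one must show that $\sum_{i\sim j}\log P_{ij}$ and $\sum_{i\not\sim j}\log(1-P_{ij})$ are not much below their means, but their summands are \emph{unbounded below} --- an edge occurring at a pair with $P_{ij}$ near $0$, or a non-edge at a pair with $P_{ij}$ near $1$, costs an arbitrarily large negative amount --- so the bounded-difference / sub-exponential hypotheses behind the usual concentration inequalities fail (the ``failure of a Lipschitz-like criterion'' referred to in the introduction). I would resolve this by dyadic peeling: partition the off-diagonal pairs by the scale $\min(P_{ij},1-P_{ij})\in[2^{-k-1},2^{-k})$; at scale $k$ the ``surprising'' outcome (an edge where $P_{ij}\approx 2^{-k}$, or a non-edge where $1-P_{ij}\approx 2^{-k}$) has probability $O(2^{-k})$ while its likelihood cost is only $O(k)$, so a Bernoulli tail bound on the number of surprising outcomes at scale $k$, followed by summation of the geometric series $\sum_k O(k)\,2^{-k}$, yields a lower tail bound with universal constants even though no uniform lower bound on the entries of $\bm{P}$ is available. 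The remaining work is bookkeeping: merging this dyadic estimate with the ordinary Bernstein bound for the ``bulk'' scales $2^{-k}\in[\tfrac1n,1-\tfrac1n]$ and checking that the four tail events collapse into the single probability $1-4\exp(-\eta^2/(8n(n-1)+4\eta))$ claimed in the statement.
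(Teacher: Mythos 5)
Your handling of \cref{ineq:kl-bound} and of the deterministic sandwich on $L_{\bm{A}}(\bm{P}^*)$ is correct and essentially equivalent to the paper's. The paper gets the upper bound $L_{\bm{A}}(\bm{P}^*) \le -2m\log C + \sum_i d_i\log d_i$ from an explicit diagonally dominant dual-feasible $\bm{Q}$ in \cref{prob:dualopt} combined with \cref{corollary:trace}, and the lower bound $L_{\bm{A}}(\bm{P}^*) \ge -2m\log C - 2m$ from the primal-feasible scaled signless Laplacian of \cref{remark:signless}; you instead read the same two bounds off the entrywise KKT consequences (\cref{corollary:bound}, \cref{eq:bd-Pij}, and \cref{eq:comp}, which indeed gives $\sum_{j:\,i\not\sim j} P^*_{ij}/(1-P^*_{ij}) \le d_i$). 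Both routes produce identical constants, so that part of your argument is sound, as is the reduction of \cref{ineq:likelihood-lower,ineq:likelihood-upper} to two-sided concentration of $L_{\bm{A}}(\bm{P})$ about its mean.

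The gap is in the concentration step itself. The paper does not prove this concentration; it quotes \cref{thm:bern-concen} (\cite[Corollary 1]{zhao2020note}), a two-sided Bernstein-type inequality built precisely for the unbounded Bernoulli log-likelihood, and that citation is the sole source of the stated probability $1-4\exp(-\eta^2/(8n(n-1)+4\eta))$. Your replacement—one-sided Bernstein for the upper deviations (fine, the centered summands are bounded above) plus dyadic peeling for the lower deviations—captures the right phenomenon (a surprising outcome at scale $k$ costs $\Theta(k)$ but occurs with probability $O(2^{-k})$), but as sketched it only controls the \emph{expected} cost via the geometric series $\sum_k O(k)2^{-k}$. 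To get a tail bound you must allocate a failure probability to each scale, and the number of scales is not bounded in terms of $n$ since no lower bound on $\min_{ij}\min(P_{ij},1-P_{ij})$ is assumed; the union bound over scales then does not obviously collapse to a clean sub-gamma tail, and certainly not to the specific constants $8n(n-1)+4\eta$ with prefactor $4$ demanded by the statement. The "bookkeeping" you defer is exactly where the difficulty lives. The clean way to finish along your lines is a per-summand moment computation rather than peeling: $\EE\bigl[\bigl(A_{ij}\log(1/P_{ij})\bigr)^k\bigr] = P_{ij}\bigl(\log(1/P_{ij})\bigr)^k \le (k/e)^k \le k!$, so each term is sub-exponential with absolute parameters and a two-sided Bernstein inequality applies with the advertised constants—this is precisely the content of the cited result, so either carry out that moment bound explicitly or simply invoke \cref{thm:bern-concen} as the paper does.
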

\begin{proof}[Proof sketch]
Since $\bm{P}$ is feasible for the convex program \cref{prob:creg},
\begin{align}\label{ineq:kl}
0 &\le \mathscr{L}_{\bm{A}, C}(\bm{P}^*) - \mathscr{L}_{\bm{A}, C}(\bm{P}) \\
&\le L_{\bm{A}}(\bm{P}^*) - L_{\bm{A}}(\bm{P}) + C\tr \bm{P} - C \tr \bm{P}^*.
\end{align}
Rearranging yields \cref{ineq:kl-bound}. To show \cref{ineq:likelihood-lower,ineq:likelihood-upper}, we first use a recent Bernoulli concentration result \cite[Corollary 1]{zhao2020note} to bound $L_{\bm{A}}(\bm{P})$ with high probability. Then we bound $L_A(\bm{P}^*)$ by constructing primal and dual feasible points and applying \cref{corollary:trace}.
\end{proof}

From here, we can apply Hoeffding's inequality to shift the bound's dependence on $\bm{A}$ to dependence on $\bm{P}$ with high probability, yielding the following corollary.
\begin{corollary}\label{corollary:hoeffding-reduction}
Suppose $M \coloneqq \sum_{i < j} P_{ij} \in \omega(n)$. Then with high probability, 
$L_{\bm{A}}(\bm{P}) - L_{\bm{A}}(\bm{P}^*) = O(M\log C)$.
\end{corollary}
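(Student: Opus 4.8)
The plan is to read the bound off from the upper estimate \cref{ineq:likelihood-upper} of \cref{thm:likelihood-bound}. Three moves are needed: discard the nonpositive entropy term $-H[\bm P]$; replace the random edge count $m=|E|$ occurring there by its order $O(M)$ via a concentration inequality; and calibrate the free slack parameter $\eta$ so that the tail probability in \cref{thm:likelihood-bound} still tends to one while $\eta$ is itself absorbed into $O(M\log C)$. The hypothesis $M\in\omega(n)$ is exactly what makes the last move possible.

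In detail, $M=\sum_{i<j}P_{ij}\in\omega(n)$ and $n\to\infty$ force $M\to\infty$. Since $m=\sum_{i<j}A_{ij}$ is a sum of independent $\mathrm{Bernoulli}(P_{ij})$ variables with $\EE m=M$, a multiplicative Chernoff bound gives $M/2\le m\le 2M$ with probability at least $1-2\exp(-M/8)\to 1$. On this event, using $-H[\bm P]\le 0$ and $C\ge 1$ in \cref{ineq:likelihood-upper},
\[
L_{\bm A}(\bm P)-L_{\bm A}(\bm P^*)\;\le\;\eta+4M\log C+4M .
\]
The probability guarantee $1-4\exp(-\eta^2/(8n(n-1)+4\eta))$ converges to $1$ exactly when $\eta\in\omega(n)$, while the right-hand side above is $O(M\log C)$ once $\eta=O(M)$ and $\log C=\Omega(1)$; the latter is free, because \cref{ineq:likelihood-upper} is invoked under the standing assumption $C>\max_i d_i$ and, on our event, $\max_i d_i\ge 2m/n\ge M/n\to\infty$. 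Since $M$ grows strictly faster than $n$ we can meet both conditions on $\eta$: taking $\eta\coloneqq\sqrt{Mn}$ gives $\eta/n=\sqrt{M/n}\to\infty$, so $\eta\in\omega(n)$, and $\eta\le M$ once $n\le M$. A union bound over the Chernoff event and the event of \cref{ineq:likelihood-upper} then shows that, with probability at least $1-2\exp(-M/8)-4\exp(-Mn/(8n(n-1)+4\sqrt{Mn}))\to 1$,
\[
L_{\bm A}(\bm P)-L_{\bm A}(\bm P^*)\;\le\;\sqrt{Mn}+4M\log C+4M\;=\;O(M\log C).
\]

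The step I expect to be the main obstacle is this calibration of $\eta$: one must push the slack up to $\omega(n)$—necessary to defeat the Bernoulli tail inherited from \cite[Corollary 1]{zhao2020note} through \cref{thm:likelihood-bound}—without overshooting the target order, and $M\in\omega(n)$ is precisely the room required. The remaining ingredients are routine: dropping a sign-definite term, a standard Chernoff bound on $m$, and collecting lower-order constants. For the reverse direction one argues analogously from \cref{ineq:likelihood-lower}, using on the Chernoff event the elementary estimates $\sum_i d_i\log d_i\le 2m\log n=O(M\log n)$ and $H[\bm P]=-\sum_{i\ne j}P_{ij}\log P_{ij}\le\frac{2}{e}\sum_{i\ne j}\sqrt{P_{ij}}\le\frac{2}{e}\sqrt{2M\,n(n-1)}=O(n\sqrt M)$ (Cauchy--Schwarz), which yield $L_{\bm A}(\bm P)-L_{\bm A}(\bm P^*)\ge-\eta-O(M\log n)-O(n\sqrt M)$; this matches $-O(M\log C)$ whenever $C$ and $M$ are large enough for those two terms to be of that order—e.g.\ $C=n^{\Omega(1)}$ in the dense regime $M=\Theta(n^2)$, or the conditions considered in \cref{subsec:generalization}—so \cref{corollary:hoeffding-reduction} is two-sided in the intended regime and reduces to the displayed upper estimate in general.
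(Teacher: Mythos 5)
Your proof is correct and follows essentially the route the paper intends for this corollary: start from \cref{ineq:likelihood-upper}, drop the nonpositive $-H[\bm{P}]$ term, concentrate $m$ around $M$ (you use a multiplicative Chernoff bound where the paper invokes Hoeffding---an immaterial substitution), and use $M\in\omega(n)$ to choose $\eta$ both in $\omega(n)$ (so the tail probability from \cref{thm:bern-concen} vanishes) and $O(M)$ (so it is absorbed into $O(M\log C)$). Your closing remarks on the lower direction and on $\log C=\Omega(1)$ under $C>\max_i d_i$ address ambiguities in the paper's own statement rather than gaps in your argument, so nothing essential is missing.
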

\subsection{Consistency guarantees for a modified program}\label{subsec:generalization}
We now turn to formulating a consistency bound after slightly modifying \cref{prob:creg}. In particular, we impose the mild assumption that $\frac{1}{n} \le P_{ij} \le 1 - \frac{1}{n}$ for $i \neq j$. We think of these as nontriviality conditions---they ensure that no node has expected degree less than $1$ or greater than $n-1$. Under this model, we obtain that with high probability, the KL divergence $D(\bm{P} \parallel \bm{P}^*)$ is $O(\EE[\tr \bm{P}^*]n^{-1/2}\log n + Cn^{-2}\tr \bm{P})$. We also identify conditions under which this bound is effective.

With these new constraints, the primal problem becomes
\begin{equation}\tag{REG-MOD}\label{prob:reg-mod}
\begin{alignedat}{3}
\max_{\bm{P}}              &\quad&  \sum_{i \sim j} \log P_{ij} + \sum_{i \not\sim j} \log(1-P_{ij}) - C\tr \bm{P} &&          & \\
\text{subject to: } &\quad&  \frac{1}{n} \le P_{ij} \le 1 - \frac{1}{n}    && &\quad \text{ for all } i \neq j\\
                    &\quad&  \bm{P} \succeq 0  &&   &\quad 
\end{alignedat}
\end{equation}

With the additional constraints in \cref{prob:reg-mod}, a duality analysis produces the following bound (cf. \cref{corollary:trace}); see \cref{app:b} for the complete proof. To state the result we recall the $\Lambda$ notation introduced in \cref{remark:rank-barrier}. We extend the notation in a few  ways. First, we let $\Lambda((p, q))$ denote the set of indices $i \neq j$ where $p < P_{ij}^* < q$. Next, we define $\Lambda_0(p) \coloneqq \Lambda(p) \cap E^c$ and $\Lambda_1(p) \coloneqq \Lambda(p) \cap E$, extending the definition to $\Lambda((p,q))$ in the obvious way. Finally, we define $Z(\bm{P}^*) \coloneqq \#\Lambda_0(\tfrac{1}{n}) - \#\Lambda_0((\tfrac{1}{n}, 1-\tfrac{1}{n}))$. 
\begin{proposition}\label{prop:new-trace-bound}
Let $\bm{P}^*$ be an optimal solution to \cref{prob:reg-mod}. Then 
\begin{equation}\label{ineq:new-trace-bound}
\tr \bm{P}^* \le \frac{2m}{C} + \frac{1}{n}\#\Lambda(\tfrac{1}{n})- \#\Lambda_1(\tfrac{1}{n})- \frac{n-1}{C}\#\Lambda_0(1-\tfrac{1}{n}) - \frac{1}{n-1}\#\Lambda_0((\tfrac{1}{n}, 1-\tfrac{1}{n})).
\end{equation}
It follows that 
\begin{equation}\label{ineq:new-trace-bound-simp}
\tr \bm{P}^* \le \frac{2m}{C} + \frac{1}{n}Z(\bm{P}^*).
\end{equation}
\end{proposition}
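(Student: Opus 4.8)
The plan is to mirror the derivation that produced \cref{corollary:trace}, tracking the extra boundary behaviour forced by the box constraints $\tfrac1n\le P_{ij}\le 1-\tfrac1n$ of \cref{prob:reg-mod}.

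First I would set up the duality. The conic reformulation of \cref{prob:reg-mod} is obtained from \cref{prob:creg} by appending the explicit linear constraints $\tfrac1n\le P_{ij}\le 1-\tfrac1n$ (these are no longer implied by the exponential cones), and the interior point of \cref{lemma:slater} is still strictly feasible, so Slater's condition and hence strong duality still hold. The KKT system is then that of \cref{prop:dual} together with complementary slackness for two box multipliers $\mu_{ij},\xi_{ij}\ge 0$ on each off-diagonal entry; in particular $\bm{P}^*\bm{Q}^*=\bm{0}$, $\bm{Q}^*\succeq 0$, and $\diag\bm{Q}^*=C$ still hold, since nothing touches the diagonal. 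Absorbing the box multipliers and reusing \cref{lemma:tech} exactly as in the proof of \cref{prop:algebraic-rel}, I would obtain the refined dictionary: for $i\sim j$, either $\tfrac1n<P^*_{ij}<1-\tfrac1n$ and $P^*_{ij}Q^*_{ij}=-1$, or $P^*_{ij}=\tfrac1n$ and $Q^*_{ij}\le-n$, or $P^*_{ij}=1-\tfrac1n$ and $Q^*_{ij}\ge-\tfrac{n}{n-1}$; symmetrically, for $i\not\sim j$, either $\tfrac1n<P^*_{ij}<1-\tfrac1n$ and $P^*_{ij}Q^*_{ij}=\tfrac{P^*_{ij}}{1-P^*_{ij}}$, or $P^*_{ij}=\tfrac1n$ and $Q^*_{ij}\le\tfrac{n}{n-1}$, or $P^*_{ij}=1-\tfrac1n$ and $Q^*_{ij}\ge n$. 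These six cases are indexed precisely by the $\Lambda$ classes appearing in the statement.

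Next, as in the passage from \cref{eq:comp} to \cref{corollary:trace}, I would take the diagonal of $\bm{P}^*\bm{Q}^*=\bm{0}$ and sum over $i$ to get the identity $C\tr\bm{P}^*=-\sum_{i\neq j}P^*_{ij}Q^*_{ij}$, and then bound the right-hand side term by term using the dictionary. Interior edges contribute exactly $+1$ each; interior non-edges contribute $-\tfrac{P^*_{ij}}{1-P^*_{ij}}<-\tfrac1{n-1}$; entries pinned at $\tfrac1n$ or $1-\tfrac1n$ contribute $-\tfrac1n Q^*_{ij}$ or $-\tfrac{n-1}{n}Q^*_{ij}$, which I would bound using the one-sided estimates from the dictionary together with the crude inequality $|Q^*_{ij}|\le\sqrt{Q^*_{ii}Q^*_{jj}}=C$. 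Grouping the contributions by $\Lambda$ class, using $\sum_i d_i=2m$ to merge the edge counts, and dividing by $C$, one arrives at \cref{ineq:new-trace-bound}. Then \cref{ineq:new-trace-bound-simp} follows by discarding the manifestly nonpositive correction terms, combining $\#\Lambda(\tfrac1n)=\#\Lambda_0(\tfrac1n)+\#\Lambda_1(\tfrac1n)$ with $\tfrac1{n-1}\ge\tfrac1n$, and identifying the residual $\tfrac1n\#\Lambda_0(\tfrac1n)-\tfrac1n\#\Lambda_0((\tfrac1n,1-\tfrac1n))$ with $\tfrac1n Z(\bm{P}^*)$.

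The main obstacle is the bookkeeping for the pinned entries, for which the KKT system delivers only one-sided control on $Q^*_{ij}$: for an upper bound on $\tr\bm{P}^*$ every class of terms must be estimated in the favourable direction, and the delicate classes are the entries pinned at $\tfrac1n$ — both edges and non-edges — where $-P^*_{ij}Q^*_{ij}=-Q^*_{ij}/n$ can be as large as $C/n$. These entries are what force the $\#\Lambda_1(\tfrac1n)$ and $\#\Lambda_0(\tfrac1n)$ counts into the bound (the latter being the origin of $Z(\bm{P}^*)$), and pinning down their coefficients cleanly requires combining the per-entry estimate $|Q^*_{ij}|\le C$ with the aggregate cancellation already present in $\diag(\bm{P}^*\bm{Q}^*)=\bm{0}$. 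Entries pinned at $1-\tfrac1n$ and interior non-edges, by contrast, all contribute negatively and only sharpen the bound.
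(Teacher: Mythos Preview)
Your proposal is correct and follows essentially the same route as the paper: set up the KKT system for \cref{prob:reg-mod} with box multipliers, read off $\diag\bm{Q}^*=C$ and $\bm{P}^*\bm{Q}^*=\bm{0}$, do casework on $P^*_{ij}$ hitting the interior or either endpoint to bound each $P^*_{ij}Q^*_{ij}$ from below (using $|Q^*_{ij}|\le C$ only for entries pinned at $\tfrac1n$), then sum the diagonal identity and divide by $C$. The only cosmetic difference is that the paper derives the entrywise relations by taking the Lagrangian of \cref{prob:reg-mod} directly and differentiating the $\log$ terms (obtaining $-1/P^*_{ij}=c_{ij}-d_{ij}+Q^*_{ij}$ for edges and $1/(1-P^*_{ij})=c_{ij}-d_{ij}+Q^*_{ij}$ for non-edges), rather than passing through the exponential-cone reformulation and \cref{lemma:tech}; the resulting dictionary and the subsequent bookkeeping are identical to yours.
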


We assume that the true $\bm{P}$ is feasible for \cref{prob:reg-mod}, so that $\mathscr{L}_{\bm{A}, C}(\bm{P}^*) \ge \mathscr{L}_{\bm{A}, C}(\bm{P})$. For clarity, we denote by $W_{\bm{A}}(\bm{M})$ the centered random variable $L_{\bm{A}}(\bm{M}) - \EE[L_{\bm{A}}(\bm{M})]$. Recalling \cref{ineq:kl}, but this time adding and subtracting expectations, we find that 
\begin{align*}
0 &\le \EE[L_{\bm{A}}(\bm{P}) - L_{\bm{A}}(\bm{P^*})]  + W_{\bm{A}}(\bm{P}^*) - W_{\bm{A}}(\bm{P}) - C\tr(\bm{P}^*) + C\tr(\bm{P}) \\
&\le -n(n-1)D(\bm{P} \parallel \bm{P}^*) + W_{\bm{A}}(\bm{P}^*) - W_{\bm{A}}(\bm{P}) - C\tr(\bm{P}^*) + C\tr(\bm{P}).
\end{align*}
\Cref{thm:bern-concen} gives us concentration of $W_{\bm{A}}(\bm{P})$, so it suffices to obtain a probabilistic upper bound on $W_{\bm{A}}(\bm{P}^*)$. We bound this quantity over a compact neighborhood of $\bm{P}^*$ rather than directly bounding it for $\bm{P}^*$. In particular, define the set $F_{\bm{A}}$ to be the intersection of the feasible set of \cref{prob:reg-mod} with the trace ball centered at the origin with radius $\tr \bm{P}^*$. Clearly $\bm{P}^* \in F_{\bm{A}}$, and since \cref{prop:new-trace-bound} controls $\tr \bm{P}^*$, $F_{\bm{A}}$ is a compact subset of the original feasible set. Thus it suffices to probabilistically upper-bound $\sup_{\bm{M} \in F_{\bm{A}}} W_{\bm{A}}(\bm{M})$.

Indeed, this can be achieved through a similar approach to that in \cite{davenport20141}; the details of the proof are left to \cref{app:c}. The upshot is that for sparse enough graphs, we indeed obtain consistency as the number of nodes in the RDPG increases.

\begin{theorem}\label{thm:generalization}
Let $\bm{P}^*$ be an optimal solution to \cref{prob:reg-mod}, and suppose $\bm{P}$ is feasible for \cref{prob:reg-mod}. Then there is an absolute constant $K$, such that with probability at least $1-\delta$, we have 
\[
D(\bm{P} \parallel \bm{P}^*) \le \frac{K\EE[\tr(\bm{P}^*)]\log n}{\delta \sqrt{n}} + \frac{C\tr(\bm{P}) - C\tr(\bm{P}^*)}{n(n-1)} - \frac{L_{\bm{A}}(\bm{P})+H[\bm{P}]}{n(n-1)}.
\]
\end{theorem}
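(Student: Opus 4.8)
The plan is to play the optimality of $\bm{P}^*$ for \cref{prob:reg-mod} against a uniform (over a data-dependent trace ball) bound on the centered empirical log-likelihood process. Since $\bm{P}$ is feasible for \cref{prob:reg-mod}, optimality gives $\mathscr{L}_{\bm{A},C}(\bm{P}^*) \ge \mathscr{L}_{\bm{A},C}(\bm{P})$. Writing $g(\bm{M}) \coloneqq \sum_{i\neq j}\bigl(P_{ij}\log M_{ij} + (1-P_{ij})\log(1-M_{ij})\bigr) = \EE[L_{\bm{A}}(\bm{M})]$, so that $W_{\bm{A}}(\bm{M}) = L_{\bm{A}}(\bm{M}) - g(\bm{M})$ and $g(\bm{P}^*) = g(\bm{P}) - n(n-1)D(\bm{P}\parallel\bm{P}^*)$ (the identity relating $g$ to the matrix KL divergence), I would expand $\mathscr{L}_{\bm{A},C}(\bm{P}^*) - \mathscr{L}_{\bm{A},C}(\bm{P}) \ge 0$ into
\[
n(n-1)\,D(\bm{P}\parallel\bm{P}^*) \;\le\; W_{\bm{A}}(\bm{P}^*) - W_{\bm{A}}(\bm{P}) + C\tr\bm{P} - C\tr\bm{P}^*.
\]
The term $-W_{\bm{A}}(\bm{P}) = g(\bm{P}) - L_{\bm{A}}(\bm{P})$ is bounded deterministically by $-H[\bm{P}] - L_{\bm{A}}(\bm{P})$, since $g(\bm{P}) = -H[\bm{P}] + \sum_{i\neq j}(1-P_{ij})\log(1-P_{ij}) \le -H[\bm{P}]$; this supplies the last term of the claimed bound, and $C\tr\bm{P} - C\tr\bm{P}^*$ carries over verbatim. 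Dividing by $n(n-1)$, it then suffices to prove that $W_{\bm{A}}(\bm{P}^*) \le K n^{3/2}\,\EE[\tr\bm{P}^*]\log n/\delta$ with probability at least $1-\delta$.

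To bound $W_{\bm{A}}(\bm{P}^*)$ — subtle because $\bm{P}^*$ depends on $\bm{A}$ — I would pass to a supremum over the random set $F_{\bm{A}}$, the feasible region of \cref{prob:reg-mod} intersected with the trace ball $\{\tr\bm{M}\le\tr\bm{P}^*\}$, so $W_{\bm{A}}(\bm{P}^*)\le\sup_{\bm{M}\in F_{\bm{A}}}W_{\bm{A}}(\bm{M})$. For a \emph{fixed} radius $r$, let $F_r$ be the feasible region of \cref{prob:reg-mod} intersected with $\{\tr\bm{M}\le r\}$ and estimate $\EE\bigl[\sup_{\bm{M}\in F_r}W_{\bm{A}}(\bm{M})\bigr]$ by the classical empirical-process route, as in \cite{davenport20141}: symmetrization introduces Rademacher signs; the Ledoux--Talagrand contraction principle removes the per-coordinate losses $m\mapsto A_{ij}\log m + (1-A_{ij})\log(1-m)$, which on the interval $[\tfrac1n,1-\tfrac1n]$ — exactly the box enforced in \cref{prob:reg-mod}, and precisely the Lipschitz control that is unavailable for \cref{prob:creg} — are $n$-Lipschitz; and the residual $\EE\sup_{\bm{M}\succeq0,\,\tr\bm{M}\le r}\langle\bm{\epsilon},\bm{M}\rangle$ is at most $r\,\EE\norm{\bm{\epsilon}}\le c\,r\sqrt{n}$ by the operator-norm bound for a random sign matrix (for PSD $\bm{M}$, $\tr\bm{M}=\norm{\bm{M}}_*$). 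Multiplying the two factors gives $\EE\bigl[\sup_{\bm{M}\in F_r}W_{\bm{A}}(\bm{M})\bigr]\le K_1 n^{3/2}r$, crucially \emph{linear} in $r$.

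Since the radius $\tr\bm{P}^*$ of $F_{\bm{A}}$ is itself random, I would conclude with a peeling argument. \Cref{prop:new-trace-bound}, together with the crude bounds $m\le\binom{n}{2}$ and $Z(\bm{P}^*)\le n(n-1)$, shows $\tr\bm{P}^*$ lies between $\tfrac1n$ (from $\tr\bm{P}^*\ge\norm{\bm{P}^*}\ge\max_{i\neq j}P_{ij}^*\ge\tfrac1n$) and $O(n^2)$ deterministically, so it suffices to union over the $O(\log n)$ dyadic radii $r_k=2^k$ in that range; on the event $r_{k-1}<\tr\bm{P}^*\le r_k$ one has $\sup_{\bm{M}\in F_{\bm{A}}}W_{\bm{A}}(\bm{M})\le\sup_{\bm{M}\in F_{r_k}}W_{\bm{A}}(\bm{M})$, and the linearity in $r$ of the per-level estimate lets the levels be combined against $\EE[\tr\bm{P}^*]$ at the cost of a single $\log n$ factor from the peeling. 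Converting the resulting first-moment estimate into a high-probability statement by Markov's inequality — rather than an exponential tail bound — is what produces the $1/\delta$ dependence. I expect the main obstacle to be exactly this last stage: retaining the dependence on $\tr\bm{P}^*$ (using the deterministic bound $O(n^2)$ alone would be hopelessly lossy after dividing by $n(n-1)$), correctly handling the coupling between $\bm{A}$ and the constraint set $F_{\bm{A}}$, and tracking the constants and the $\log n$ through the peeling and symmetrization. A secondary technical point is that $\sup_{\bm{M}\in F_{\bm{A}}}W_{\bm{A}}(\bm{M})$ need not be nonnegative, so applying Markov requires first passing to its positive part and absorbing the (lower-order, sub-Gaussian) fluctuation of the supremum about its mean, using bounded differences in $\bm{A}$.
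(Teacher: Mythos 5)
Your overall strategy is the paper's: play the optimality gap $\mathscr{L}_{\bm{A},C}(\bm{P}^*)\ge\mathscr{L}_{\bm{A},C}(\bm{P})$ against a bound on the centered process over the trace ball $F_{\bm{A}}$, with symmetrization, contraction, the duality $|\langle\bm{E},\bm{M}\rangle|\le\norm{\bm{E}}\tr\bm{M}$ for PSD $\bm{M}$, Seginer's $\EE\norm{\bm{E}}\le K\sqrt{n}$, and a single Markov step producing the $1/\delta$; your deterministic treatment of $-W_{\bm{A}}(\bm{P})\le -L_{\bm{A}}(\bm{P})-H[\bm{P}]$ is also exactly what the statement requires. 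The genuine gap is in your final peeling step. Peeling over fixed dyadic radii $r_k$ with a union bound and per-level Markov gives, on the event $r_{k-1}<\tr\bm{P}^*\le r_k$, a bound of order $n^{3/2}(\log n)\,\tr\bm{P}^*/\delta$ --- that is, a bound in terms of the \emph{random} trace $\tr\bm{P}^*$, not $\EE[\tr\bm{P}^*]$ as the theorem asserts. The sentence claiming the levels can be ``combined against $\EE[\tr\bm{P}^*]$'' is precisely the unjustified step: there is no deterministic passage from $\tr\bm{P}^*$ to its expectation, and converting one to the other needs an extra argument (e.g.\ a second Markov application on $\tr\bm{P}^*$, costing another factor of $1/\delta$), so as written you prove a variant of the statement rather than the stated inequality.

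The paper instead never fixes the radius: it keeps the random set $F_{\bm{A}}$ inside one symmetrization--contraction chain, uses independence of the Rademacher matrix $\bm{E}$ from $\bm{A}$ to factor $\EE_{\bm{A},\epsilon}[\sup_{F_{\bm{A}}}|\langle\bm{E},\bm{M}\rangle|]\le\EE_\epsilon[\norm{\bm{E}}]\,\EE_{\bm{A}}[\sup_{F_{\bm{A}}}\tr\bm{M}]=\EE_\epsilon[\norm{\bm{E}}]\,\EE[\tr\bm{P}^*]$, and applies Markov once at the end; its $\log n$ comes not from peeling but from the contraction constant, since it verifies $|\log x|\le (n\log n)\,x$ on $[\tfrac1n,1-\tfrac1n]$ rather than recentering to use the Lipschitz constant $n$ as you do. To be fair, your peeling is a reasonable reaction to the coupling between $\bm{A}$ and $F_{\bm{A}}$ (which the paper's direct symmetrization over a data-dependent class treats rather casually), and the random-trace variant you would obtain is essentially as useful downstream (cf.\ \cref{prop:new-trace-bound} and \cref{corollary:generalization}); but to recover the theorem as stated you must either adopt the paper's route or supply the missing step converting $\tr\bm{P}^*$ into $\EE[\tr\bm{P}^*]$.
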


A quick application of \cref{prop:new-trace-bound} and \cref{thm:bern-concen} yields the following corollary. 
\begin{corollary}\label{corollary:generalization}
With high probability, we have 
\[
D(\bm{P} \parallel \bm{P}^*) \le \frac{KM\log n}{\delta C\sqrt{n}} + \frac{K\EE[Z(\bm{P}^*)]\log n}{\delta n\sqrt{n}} + \frac{C\tr(\bm{P})}{n(n-1)} + O(1/n),
\]
where $M \coloneqq \EE[m] = \sum_{i < j} P_{ij}$ is the expected number of observed edges.
\end{corollary}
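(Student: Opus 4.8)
The plan is to start from the bound already proved in \cref{thm:generalization} and post-process its right-hand side: substitute the trace estimate of \cref{prop:new-trace-bound}, discard a nonpositive term, and clean up the residual likelihood term using the Bernoulli concentration result \cref{thm:bern-concen}. Concretely, \cref{thm:generalization} gives, with probability at least $1-\delta$,
\[
D(\bm{P} \parallel \bm{P}^*) \le \frac{K\EE[\tr(\bm{P}^*)]\log n}{\delta \sqrt{n}} + \frac{C\tr(\bm{P}) - C\tr(\bm{P}^*)}{n(n-1)} - \frac{L_{\bm{A}}(\bm{P})+H[\bm{P}]}{n(n-1)},
\]
so no new optimality analysis is needed; everything reduces to estimating the three summands.

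First I would take expectations in \cref{ineq:new-trace-bound-simp}. Since $\EE[m] = M$ and $\tr\bm{P}^* \ge 0$ (because $\bm{P}^* \succeq 0$), this yields $0 \le \EE[\tr\bm{P}^*] \le \frac{2M}{C} + \frac{1}{n}\EE[Z(\bm{P}^*)]$. Plugging the upper estimate into the first term of the displayed inequality produces exactly $\frac{2KM\log n}{\delta C\sqrt{n}} + \frac{K\EE[Z(\bm{P}^*)]\log n}{\delta n\sqrt{n}}$, and the stray factor of $2$ is absorbed into the constant $K$. The term $-\frac{C\tr(\bm{P}^*)}{n(n-1)}$ is nonpositive and is simply dropped, while $\frac{C\tr(\bm{P})}{n(n-1)}$ is kept as is.

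It remains to show $-\frac{L_{\bm{A}}(\bm{P})+H[\bm{P}]}{n(n-1)} = O(1/n)$ with high probability. I would split $L_{\bm{A}}(\bm{P}) = \EE[L_{\bm{A}}(\bm{P})] + W_{\bm{A}}(\bm{P})$ and compute the deterministic part, $\EE[L_{\bm{A}}(\bm{P})] + H[\bm{P}] = \sum_{i\neq j}(1-P_{ij})\log(1-P_{ij})$; the elementary inequality $(1-x)\log(1-x) \ge -x$ shows this lies in $[-2M, 0]$, so after dividing by $n(n-1)$ it contributes at most $\frac{2M}{n(n-1)}$, which in the regime of interest (where $C$ is not so large as to dwarf $n^{3/2}\log n$) is dominated by the already-present term $\frac{2KM\log n}{\delta C\sqrt{n}}$. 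For the fluctuation, \cref{thm:bern-concen} gives $|W_{\bm{A}}(\bm{P})| = O(n\sqrt{\log(1/\delta)})$ with probability at least $1-\delta$, whence $\frac{|W_{\bm{A}}(\bm{P})|}{n(n-1)} = O(1/n)$. A union bound over the event from \cref{thm:generalization} and the concentration event for $W_{\bm{A}}(\bm{P})$, together with a constant readjustment of $\delta$ and $K$, finishes the proof.

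The main obstacle I anticipate is not a single hard estimate but the bookkeeping around $-L_{\bm{A}}(\bm{P})$: since $L_{\bm{A}}(\bm{P})$ is a sum of $\Theta(n^2)$ log-likelihood contributions, its centered value must be controlled at scale $o(n^2)$, and one must verify that the deterministic remainder $\sum_{i\neq j}(1-P_{ij})\log(1-P_{ij})$ really does get swallowed by the stated bound rather than leaving an extra $\Theta(M/n^2)$ term outside it. This is precisely where the ``sparse enough'' hypothesis alluded to in the surrounding text (and the constraint $P_{ij}\le 1-\tfrac1n$ of \cref{prob:reg-mod}) does the essential work.
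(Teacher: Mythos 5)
Your proposal is correct and follows essentially the same route as the paper's (very terse) proof: take expectations in \cref{prop:new-trace-bound} to replace $\EE[\tr \bm{P}^*]$ by $\tfrac{2M}{C} + \tfrac{1}{n}\EE[Z(\bm{P}^*)]$, drop the nonpositive $-C\tr(\bm{P}^*)$ term, and control $L_{\bm{A}}(\bm{P})$ around its mean via \cref{thm:bern-concen}, absorbing constants into $K$. Your explicit treatment of the deterministic remainder $\sum_{i\neq j}(1-P_{ij})\log(1-P_{ij})$ (and the regime condition needed to absorb it) is, if anything, more careful than the paper's own $O(1/n)$ bookkeeping, which is loose at exactly the same spot.
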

\begin{remark}
There are two sources of the probabilistic nature of the bound: $\delta$ and the concentration of $L_{\bm{A}}(\bm{P})$. We can for example take $\delta = O(\log n)$ and $\eta = \omega(n)$ in \cref{thm:bern-concen}.
\end{remark}
We discuss the shape of the bound and its implications. Note that \emph{a priori} 
\[
D(P_{ij} \parallel P_{ij}^*) \le \log n,
\]
since for every feasible $\bm{M}$ we have $\frac{1}{n} \le M_{ij} \le 1-\frac{1}{n}$. Further inspection of \cref{corollary:generalization} reveals that under certain conditions on $\bm{P}^*$ and $\bm{P}$, \cref{thm:generalization} nontrivially improves on this $\log n$ bound and gives consistency as $n \to \infty$.
\begin{corollary}
Suppose $\EE[Z(\bm{P}^*)] = O(n^{3/2}/\log n)$. Then the optimal $C$ with respect to the bound in \cref{corollary:generalization} satisfies $C^2 = O(Mn^{3/2}\log n/\tr \bm{P})$. If we further have $\tr \bm{P} = O(M/n)$ and $M = o(n^{7/4}/(\log n)^{1/2})$, then $D(\bm{P} \parallel \bm{P}^*) = o(1)$ with high probability.
\end{corollary}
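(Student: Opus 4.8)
The plan is to view the right-hand side of \cref{corollary:generalization} as a function of the tunable penalty $C$ and minimize it. Abbreviate that bound as $D(\bm P \parallel \bm P^*) \le a/C + bC + c + O(1/n)$, where $a = KM\log n/(\delta\sqrt n)$, $b = \tr\bm P/(n(n-1))$, and $c = K\EE[Z(\bm P^*)]\log n/(\delta n\sqrt n)$ gathers the part that does not involve $C$. Only the first two terms depend on $C$; one is decreasing and the other increasing, so AM--GM shows $a/C + bC$ is minimized at $C_\star = \sqrt{a/b}$, with minimal value $2\sqrt{ab}$. Substituting and treating $\delta$ as a fixed constant (as the statement implicitly does), $C_\star^2 = a/b = \Theta\bigl(Mn^{3/2}\log n / \tr\bm P\bigr)$, which is the first assertion. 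Here one should also check that $C_\star$ lands in the range for which \cref{corollary:generalization} (hence \cref{prop:new-trace-bound} and \cref{thm:generalization}) was established; this is immediate since those statements place no upper restriction on $C$ and $C_\star \to \infty$ under the growth hypotheses.

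Next I would plug $C_\star$ back in. Since $b = \Theta(\tr\bm P / n^2)$, we get $ab = \Theta\bigl(M\tr\bm P\log n / n^{5/2}\bigr)$, so the balanced pair contributes $2\sqrt{ab} = \Theta\bigl(\sqrt{M\tr\bm P\log n}\,/\,n^{5/4}\bigr)$. Invoking the extra hypothesis $\tr\bm P = O(M/n)$ replaces $\sqrt{M\tr\bm P}$ by $O(M/\sqrt n)$, so this contribution is $O\bigl(M\sqrt{\log n}\,/\,n^{7/4}\bigr)$, which is $o(1)$ exactly when $M = o\bigl(n^{7/4}/(\log n)^{1/2}\bigr)$ --- precisely the assumed sparsity. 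The residual $O(1/n)$ term is of course $o(1)$, and the $C$-independent term $c$ is controlled using $\EE[Z(\bm P^*)] = O(n^{3/2}/\log n)$: substituting gives $c = O(1/\delta)$, and (recalling the flexibility in the probability level noted after \cref{corollary:generalization}, or reading the hypothesis on $\EE[Z(\bm P^*)]$ as $o$ rather than $O$) this is $o(1)$ as well. Summing the three contributions gives $D(\bm P \parallel \bm P^*) = o(1)$ with high probability.

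The one point that genuinely requires care --- and the likely main obstacle --- is the bookkeeping around $\delta$ and the term $c$: the $C$-dependent part of the bound balances cleanly to $o(1)$ under the stated conditions, but squeezing $c$ down to $o(1)$ forces one either to make the $\EE[Z(\bm P^*)]$ hypothesis a strict little-$o$ statement or to track how the probability level $\delta$ interacts with the concentration of $L_{\bm A}(\bm P)$ (via the $\eta = \omega(n)$ choice in \cref{thm:bern-concen}). This is bookkeeping rather than mathematics, but it must be done to justify the final ``with high probability'' claim; everything else is the two-term AM--GM optimization carried out above.
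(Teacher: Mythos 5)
Your proposal is correct and coincides with the computation the paper leaves implicit: the corollary is obtained exactly by AM--GM balancing of the two $C$-dependent terms in \cref{corollary:generalization} (giving $C_\star^2 = a/b = \Theta(Mn^{3/2}\log n/\tr\bm{P})$ for fixed $\delta$) and then substituting back, with $\tr\bm{P} = O(M/n)$ and $M = o(n^{7/4}/(\log n)^{1/2})$ driving the balanced term to $o(1)$. Your caveat about the remaining term is also well taken: with $\delta$ treated as a constant, the hypothesis $\EE[Z(\bm{P}^*)] = O(n^{3/2}/\log n)$ literally yields only $O(1/\delta)$ for that term, so the intended reading (consistent with the paper's remark on choosing $\delta$ and $\eta$) is the little-$o$ strengthening you identify --- a bookkeeping point in the paper's statement rather than a flaw in your argument.
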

\begin{remark}
Note that owing to \cref{lemma:expected-edges}, the assumption on $\tr \bm{P}$ is the tightest possible one. The exact assumption on $\EE[Z(\bm{P}^*)]$ can potentially be relaxed if the proof of \cref{prop:new-trace-bound} is refined. 
\end{remark}

\section{Experimental results}
\label{sec:experiments}
We investigate the performance of the inference technique for a variety of tasks. We find that the learned embedding appears to be pushed towards lower and lower ranks as the regularization hyperparameter $C$ is increased. Motivated by this, we examine regularization profiles that demonstrate that \cref{prob:creg} improves on the ASE in the spectral norm distance. This naturally leads to a cross validation procedure to select an embedding dimension. Through visualization and cluster analysis, we also find that our method recovers latent geometry better than the ASE in synthetic and more realistic networks. 

The two primary quantities of interest for a solution $\bm{P}^*$ to \cref{prob:creg} are (1) its fit to the true $\bm{P}$ and (2) its rank, which we denote by $d^*$. The former, which measures consistency, is difficult to measure directly with real data because we do not usually know $\bm{P}$. Similarly, we might not know the true dimension $d$ of the underlying latent vector distribution, so we must balance other considerations to select a value of the regularization parameter $C$. As a first step towards understanding how $C$ affects $d^*$, we present the spectra in \cref{fig:eigvals}, which illustrates an inverse relationship between $C$ and $d^*$. This makes sense since \cref{prob:creg} is an MAP problem where $C$ controls the strength of the Laplace prior on the spectrum of $\bm{P}$. In subsequent sections we also observe this phenomenon of approximate low rank solutions as we increase $C$; a similar phenomenon has been studied in related matrix completion problems \cite{recht2010guaranteed}. 

To examine how well our inference procedure recovers the rank and geometry of the true $\bm{P}$, we turn to synthetic data, sampled from an RDPG with a \emph{known} latent vector distribution. In \cref{subsec:synthetic}, we consider three classes of synthetic data: the SBM with a fixed block probability matrix, latent vector distributions supported on $S^{d-1}$, and latent vector distributions supported on the unit ball in $\RR^d$. 


\subsection{Implementation details}
The convex cone program \cref{prob:creg} was implemented in CVXPY using the MOSEK \cite{mosek} and SCS \cite{scs} solvers. Both solvers support semidefinite and exponential cone constraints, but differ in their algorithms: MOSEK uses an interior point method, while SCS uses the alternating direction method of multipliers (ADMM). In general, we found that MOSEK converges to more accurate solutions than SCS. On the other hand, as SCS uses the first-order ADMM, it can handle larger problem instances.

Given $\bm{P}$, we measured the closeness of fit between $\bm{P}^*$ and $\bm{P}$ by directly computing $\norm{\bm{P}^* - \bm{P}}$. In order to compute $d^*$, we used the \texttt{eigvalsh} function in the \texttt{scipy.linalg} package, with a tolerance of $1 \cdot 10^{-6}$ for MOSEK and $1 \cdot 10^{-3}$ for SCS. These thresholds were selected based on empirical observations of eigenvalue spectra for $\bm{P}^*$ such as the one shown in \cref{fig:eigvals}, and reflect a disparity in numerical precision between the solvers.

\begin{figure}
\centering
\includegraphics[width=0.5\textwidth]{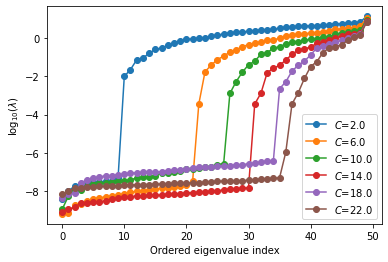}
\caption{Sample eigenvalue plots for $\bm{P}^*_C$ for a fixed realization $\bm{A}$ sampled from \cref{fig:balls-latent-vectors} as $C$ is increased. The eigenvalues are sorted in ascending order and their base-ten logarithms are plotted. Note the abrupt drop in eigenvalue size to around $10^{-6}$ for each $C$.}
\label{fig:eigvals}
\end{figure}

\subsection{Synthetic data experiments}\label{subsec:synthetic}
In order to evaluate the consistency of \cref{prob:creg}, we conducted the following three experiments, all using MOSEK for conic optimization. In each experiment, we sampled $50$ latent vectors from a fixed distribution and formed a probability matrix $\bm{P}$. Keeping $\bm{P}$ fixed, we then generated $100$ RDPGs, and for each RDPG we ran the inference problem for $2 \le C \le 25$ with a step size for $1$. We computed spectral distance $\norm{\bm{P}_C^* - \bm{P}}$ and $d^*_C$ for each RDPG and each value of $C$.

\begin{figure}[tb]
\begin{subfigure}{0.45\textwidth}
    \centering
    \includegraphics[width=0.95\linewidth]{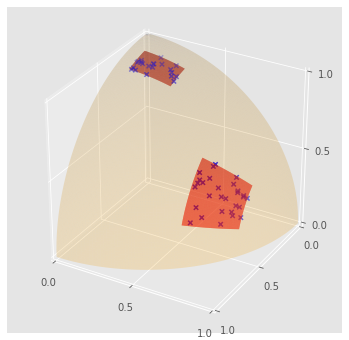}
    \caption{Latent vectors supported on $S^2$.}
    \label{fig:patch-latent-vectors}
\end{subfigure}%
\begin{subfigure}{0.45\textwidth}
    \centering
    \includegraphics[width=0.95\linewidth]{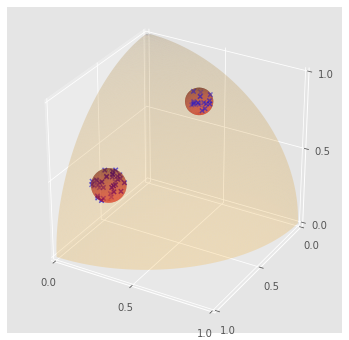}
    \caption{Latent vectors supported on two balls.}
    \label{fig:balls-latent-vectors}
\end{subfigure}
\caption{Ground truth latent vector positions. We sampled 50 latent vectors in both figures; their positions are marked by blue X's. The orange surface traces out $S^2$, and the red surfaces trace out the support of the respective latent vector distributions.}
\end{figure}

In \cref{fig:patch-cross-val}, we illustrate the results for the two patches on $S^2$ shown in \cref{fig:patch-latent-vectors}. 
\begin{figure}[tb]
\centering
\newcommand{\imgwidth}{0.33\textwidth}
\begin{subfigure}{\imgwidth}
    \centering
    \includegraphics[width=0.9\linewidth]{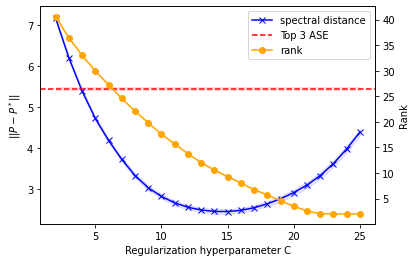}
    \caption{Patches on $S^2$}
    \label{fig:patch-cross-val}
\end{subfigure}\hfill%
\begin{subfigure}{\imgwidth}
    \centering
    \includegraphics[width=0.9\linewidth]{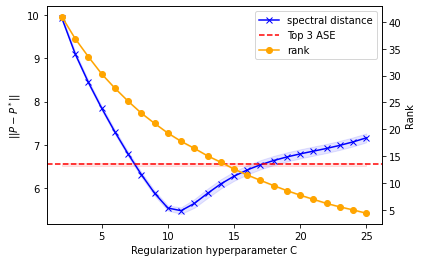}
    \caption{Balls in $B^3$}
    \label{fig:balls-cross-val}
\end{subfigure}\hfill%
\begin{subfigure}{\imgwidth}
    \centering
    \includegraphics[width=0.9\linewidth]{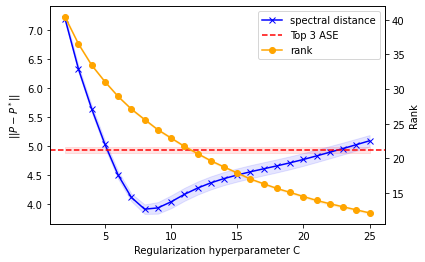}
    \caption{SBM}
    \label{fig:sbm-cross-val}
\end{subfigure}
\caption{Regularization profiles for synthetic 50 node RDPGs. The rank (yellow) decreases monotonically. Note also the clear local minimum in spectral norm distance (blue) in all three plots. Furthermore, for a certain range of $C$, our method improves on the top-$3$ eigenvector ASE spectral norm distance.} 
\end{figure}
Interestingly, although the spectral norm distance does not achieve its local minimum where $d_C^* = 3$, we will later see that the 3D and 2D visualizations are still salient. We also point out the smoothness of the cross validation curve and the clear local minimum for the spectral norm at $C \approx 13$, with $d_C^* \approx 10$, corresponding to a significant dimensionality reduction from the original rank 50 solution. 

In \cref{fig:balls-cross-val}, we show the results from the two balls in the nonnegative unit ball depicted in \cref{fig:balls-latent-vectors}. We chose this example because the latent vectors do not all have the same Euclidean norm, which could reasonably be interpreted as a disparity in the intrinsic popularities of nodes in the RDPG. In \cref{fig:balls-cross-val}, we see that the light blue shaded band representing the two--standard deviation interval is wider. Indeed, since the intra-cluster probabilities are not close to 1 (unlike the previous experiment), there is higher variance in the realizations of the RDPG. Nevertheless, we observe a local minimum at $C \approx 10$ for the cross validation curve and a monotonic relationship between $d^*$ and $C$. 

In \cref{fig:sbm-cross-val}, we depict the results from an SBM with blocks of respective size 15, 10, and 25, using the block probability matrix $\bigl[\begin{smallmatrix}
0.25 & 0.05 & 0.02 \\
0.05 & 0.35 & 0.07 \\
0.02 & 0.07 & 0.40
\end{smallmatrix}\bigr]$. Although the SBM had three clusters (as the block probability matrix is $3 \times 3$), the cross validation graph is visually similar to that in \cref{fig:balls-cross-val}, where the ground truth only contained two clusters. 

In \cref{fig:patch-cross-val,fig:balls-cross-val,fig:sbm-cross-val}, it is apparent that our method beats the ASE in terms of spectral norm distance, even when the ASE is given oracle access to $d$. Even if $\bm{P}$ is unknown, the same cross validation procedure can be used by simply replacing $\bm{P}$ with $\bm{A}$ in the metric. However, we found experimentally that using $\norm{\bm{A} - \bm{P}^*}$ does not perform as well as using $\norm{\bm{A}^2 - (\bm{P}^*)^2}$ for cross validation purposes. Hence we computed the latter metric for the inferred solutions in the earlier experiments to obtain the curves in \cref{fig:squared-patch,fig:squared-balls,fig:squared-sbm}. Interestingly, the shapes of the two curves are very similar, especially in \cref{fig:squared-balls,fig:squared-sbm}. More importantly, they have similar local minima as a function of $C$. Thus, it is reasonable to use this cross validation technique to select the best $C$ and thus single out a preferred embedding or rank. 

\begin{figure}
\centering
\newcommand{\imgwidth}{0.33\textwidth}
\begin{subfigure}{\imgwidth}
    \centering
    \includegraphics[width=0.9\linewidth]{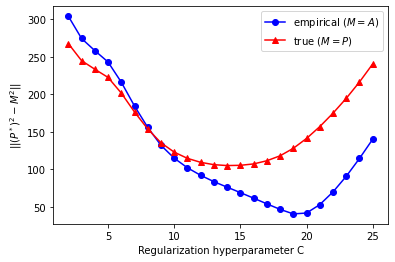}
    \caption{Patches on $S^2$}
    \label{fig:squared-patch}
\end{subfigure}\hfill%
\begin{subfigure}{\imgwidth}
    \centering
    \includegraphics[width=0.9\linewidth]{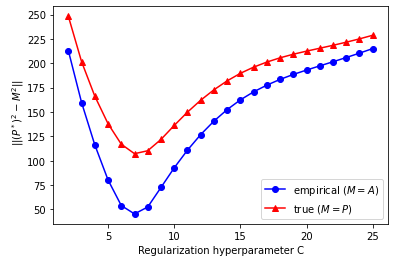}
    \caption{Balls in $B^3$}
    \label{fig:squared-balls}
\end{subfigure}\hfill%
\begin{subfigure}{\imgwidth}
    \centering
    \includegraphics[width=0.9\linewidth]{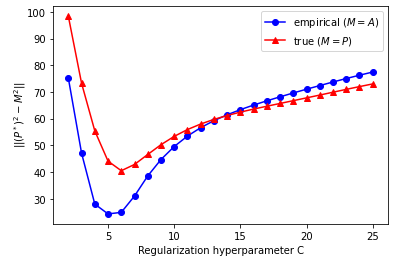}
    \caption{SBM}
    \label{fig:squared-sbm}
\end{subfigure}
\caption{Cross validation graphs comparing $\norm{\bm{P}^2 - (\bm{P}^*)^2}$ (true) and $\norm{\bm{A}^2 - (\bm{P}^*)^2}$ (empirical). The shape of the curves are generally similar. Crucially, the locations of the local minimum of the empirical and true curves are similar as a function of $C$.}
\end{figure}
\subsection{Visualization through latent vector extraction}\label{subsec:geometry-recovery}
Given the solution $\bm{P}_C^*$ for some $C$ in \cref{prob:dual}, we can extract an embedding $\bm{X}$ by taking the top 2 or 3 eigenvectors and scaling them by the square roots of their eigenvalues. We have seen that selecting $C$ via cross validation can recover better solutions $\bm{P}_C^*$ measured by the spectral norm; the hope is that the same effect can also be observed visually. 

We conducted the following experiment to demonstrate the improvement in visualization capabilities in 3 dimensions over the ASE and the applications of the inference technique to larger graphs. Starting from the synthetic distribution illustrated in \cref{fig:patch-latent-vectors}, we generated an RDPG with 300 nodes; note the two clusters on $S^2$ that support the distribution. Using SCS, we computed the regularization profile and cross validation plot for $10 \le C \le 150$ with a step size of 10 (shown in \cref{fig:large-patch-profile,fig:large-patch-cross-val}) and selected the best $C$ in spectral norm distance. We then visually compared the top-3 eigenvector embedding $\bm{P}_C^*$ with the top-3 eigenvector ASE. \Cref{fig:patch-ase} demonstrates that the ASE, while approximately recovering the geometry of the latent vector distribution, is more diffuse than the corresponding visualization in \cref{fig:patch-top-3-cross-val}. In particular, notice in \cref{fig:patch-top-3-cross-val} that fewer of the embeddings lie outside of $S^2$, and that the geometry more closely matches that displayed in \cref{fig:patch-latent-vectors}. 

\begin{figure}[tb]
\begin{subfigure}{0.45\textwidth}
    \centering
    \includegraphics[width=0.95\linewidth]{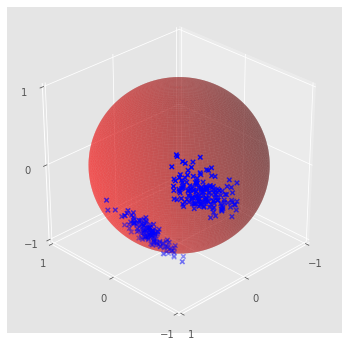}
    \caption{ASE visualization}
    \label{fig:patch-ase}
\end{subfigure}%
\begin{subfigure}{0.45\textwidth}
    \centering
    \includegraphics[width=0.95\linewidth]{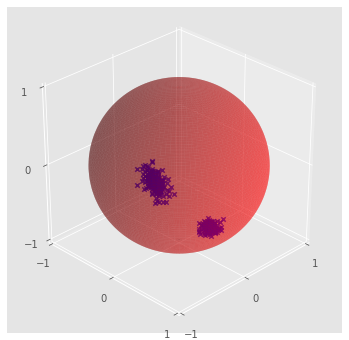}
    \caption{Embedding for $C = 60$}
    \label{fig:patch-top-3-cross-val}
\end{subfigure}\\
\begin{subfigure}{0.45\textwidth}
    \centering
    \includegraphics[width=0.95\linewidth]{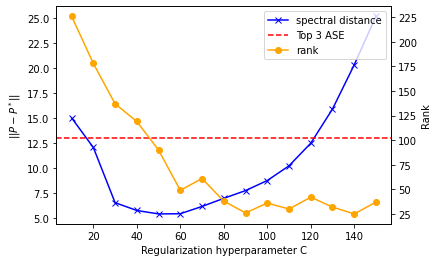}
    \caption{Regularization profile}
    \label{fig:large-patch-profile}
\end{subfigure}%
\begin{subfigure}{0.45\textwidth}
    \centering
    \includegraphics[width=0.95\linewidth]{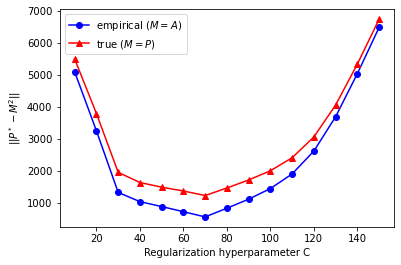}
    \caption{Cross validation}
    \label{fig:large-patch-cross-val}
\end{subfigure}
\caption{Top-3 eigenvector visualizations with the ASE (left) and our method (right). The unit sphere is overlayed, and the points are rotated to enhance visual understanding for the embedding.}
\end{figure}

We also demonstrate the clustering effects of increasing regularization. Using the same 300 node RDPG, we compare the top-3 eigenvector visualizations for various values of $C$ in \cref{fig:embedding1,fig:embedding2,fig:embedding3}. The shape of the embedding remains relatively consistent as $C$ is increased. As $C$ is increased, the numerical rank of $\bm{P}^*_C$ decreases, and the two clusters become more compact. 

In order to quantify this notion, we use the Dunn index \cite{dunn1973fuzzy}; larger Dunn indices correspond to more compact clusters. 
In \cref{fig:clustering-plot} we see that as $C$ is increased, the Dunn index also increases. Notice that the ASE has a low Dunn index, and that as $C$ increases, the learned embeddings approach a Dunn index close to that of the true embedding. Furthermore, the best embedding (around $C=60$) determined in \cref{fig:large-patch-profile,fig:large-patch-cross-val} has a similar Dunn index to the true embedding.



\begin{figure}[tb]
\centering
\begin{subfigure}{0.4\textwidth}
    \centering
    \includegraphics[width=0.7\linewidth]{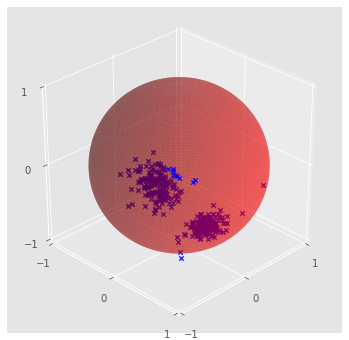}
    \caption{$C = 20$ and $d^* = 136$}
    \label{fig:embedding1}
\end{subfigure}%
\begin{subfigure}{0.4\textwidth}
    \centering
    \includegraphics[width=0.7\linewidth]{pics/patch_2_c_60.png}
    \caption{$C=60$ and $d^* = 60$}
    \label{fig:embedding2}
\end{subfigure}\\
\begin{subfigure}{0.4\textwidth}
    \centering
    \includegraphics[width=0.7\linewidth]{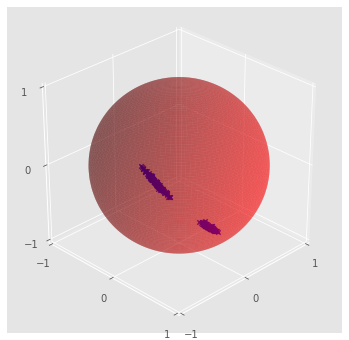}
    \caption{$C=100$ and $d^*= 20$}
    \label{fig:embedding3}
\end{subfigure}%
\begin{subfigure}{0.4\textwidth}
    \centering
    \includegraphics[width=0.95\linewidth]{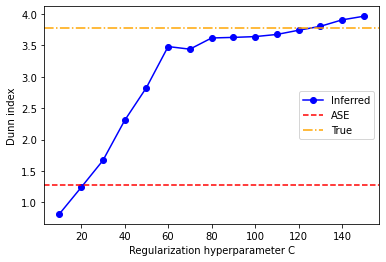}
    \caption{Dunn indices}
    \label{fig:clustering-plot}
\end{subfigure}
\caption{Three dimensional spectral embeddings of the same graph with $n=300$. Note the tighter clusters as $C$ increases, in accordance with the increasing Dunn indices.}
\end{figure}

\subsection{Realistic network experiments}\label{subsec:real-network}

We first study embeddings of the Karate Club Graph. We solved the inference problem for $10 \le C \le 100$ with a step size of $5$. In \cref{fig:karate-club-mle}, we select the smallest $C$ with $d_C^* = 2$ and compare the latent vector visualization to that of the top-2 eigenvector ASE. The color of each node represents which community it belongs to. Comparing the two embeddings, we note that the positions of the communities are swapped, which illustrates the inherent nonidentifiability of the embedding. The shrinkage effects of increasing regularization are also apparent. Although both the ASE and our method make the communities linearly separable, our method tightens the clusters more than the ASE does. 

\cref{fig:senate-graph} depicts the results of an experiment using synthetic data generated from the voting patterns of the 114th U.S. Senate. We choose this session as all $n=100$ senators were active throughout the entire session. 
Similar latent space models \cite{pauls, poole_paper} and network analyses of community detection from voting data have been studied in \cite{deford_old, moody_mucha_2013, Mucha2, Mucha876, porter, waugh2011party}. 
The edge probability $P_{ij}$ is defined to be the fraction of votes in which $i$ and $j$ voted the same way. If this vote agreement data fits the RDPG assumption, our inference procedure should be able to recover two well defined clusters corresponding to the Democratic and Republican political parties. Indeed, \cref{fig:senate-graph-mle} reveals an inferred embedding that displays the two clusters prominently. Moreover, the nodes in the middle represent the more moderate senators, suggesting that the model recovers accurate detailed geometry.

\begin{figure}[tb]
\centering
\begin{subfigure}{0.45\textwidth}
    \centering
    \includegraphics[width=0.95\linewidth]{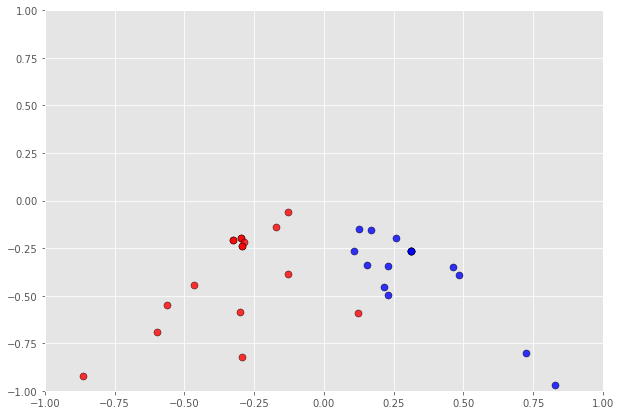}
    \caption{ASE}\label{fig:karate-club-ase}
\end{subfigure}%
\begin{subfigure}{0.45\textwidth}
    \centering
    \includegraphics[width=0.95\linewidth]{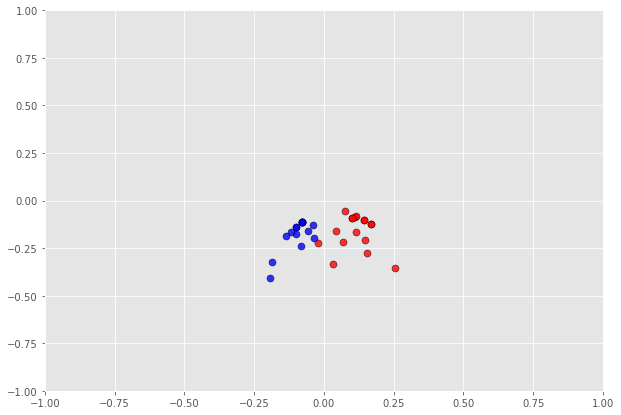}
    \caption{$C=80$ and $d^* = 2$.}\label{fig:karate-club-mle}
\end{subfigure}
\caption{Top-2 eigenvector visualization for the Karate Club Graph using the ASE (left) and the rank 2 solution for $C = 80$ (right). The axis limits are fixed to demonstrate the shrinkage effects of the regularization.}
\end{figure}

\begin{figure}[tb]
\centering
\begin{subfigure}{0.45\textwidth}
    \centering
    \includegraphics[width=0.95\linewidth]{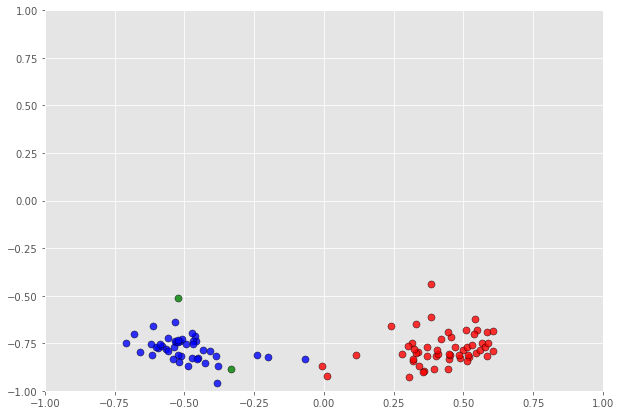}
    \caption{ASE} \label{fig:senate-graph-ase}
\end{subfigure}%
\begin{subfigure}{0.45\textwidth}
    \centering
    \includegraphics[width=0.95\linewidth]{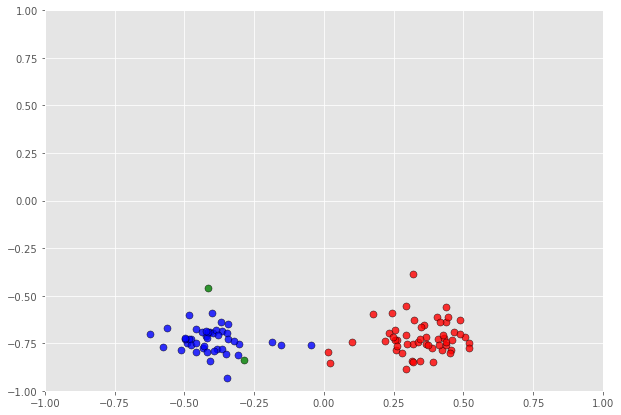}
    \caption{$C = 40$ and $d^* = 2$}
    \label{fig:senate-graph-mle}
\end{subfigure}
\caption{Top-2 eigenvector visualization for a synthetic U.S. Senate Vote Graph using the ASE (left) and the rank 2 solution for $C = 40$ (right). Blue nodes denote Democratic senators, red nodes denote Republican senators, and green nodes denote independent senators. The axis limits are fixed so the embeddings can be directly compared.}
\label{fig:senate-graph}
\end{figure}


\section{Conclusions}
\label{sec:conclusions}
We have presented a conic programming approach to MAP inference of latent vector embeddings of RDPGs. Our inference technique offers a different perspective on recovering RDPG latent geometry compared to existing methods which typically center around spectral decompositions of graph matrices. The conic formulation led to a surprisingly explicit relationship between the primal and dual problems. Importantly, the exponential cone constraints yield an exact algebraic relationship between the optimal solutions to the primal and dual problems, whereas the semidefinite cone constraint and trace regularization lead to bounds on the optimal solutions. 

Some of the limitations of our analysis stem from a lack of nontrivial lower bounds on $P_{ij}^*$. For example, while we've shown the impossibility of a useful deterministic bound on rank for large $C$, a better handle on elementwise lower bounds would likely produce a nontrivial rank reduction result for small $C$. Similarly, lower bounds would enable a probabilistic KL consistency result that does not require the model to be altered. In particular, if one is able to prove that $\frac{1}{n} \le P_{ij}^* \le  1-\frac{1}{n}$, then the results of \cref{subsec:generalization} immediately follow. 

The experimental results show that it is often possible to recover solutions of arbitrary rank by varying the regularization parameter $C$. It is not possible to do so for all realizations, as established by \cref{prop:rank-barrier}. Nevertheless, we conjecture that with high probability, the rank of inferred embeddings can be reduced nontrivially, say to order $O(\log n)$. Although we were not able to prove spectral norm consistency of the extracted embedding, the results of our cross validation and visualization experiments suggest nontrivial improvements on the ASE over a certain bounded range of $C$ values. Furthermore, we observe that as $C$ increases arbitrarily, the spectral norm distance worsens, which agrees with the theoretical guarantee that the entries of $\bm{P}^*$ are $\Theta(1/C)$. 

The proposed conic optimization problem is solved via second-order methods using MOSEK and via the first-order ADMM method using SCS. For the algorithm to be of interest for larger graphs, there would likely need to be a first-order implementation that could exploit low-rank complexity reductions. In this vein, an extension of the B\"urer-Monteiro method \cite{cifuentes2019burer}, along with more theoretical guarantees of optimality, would be particularly interesting.

\appendix
\section{Derivation of dual program} \label{app:a}
\begin{proof}[Proof of \cref{prop:dual}]
Form the Lagrangian of $\cref{prob:creg}$, with notation defined as in \cref{sec:dual}. Swapping the $\min$ and $\max$ operators and eliminating the primal constraints by enforcing the dual constraints, we obtain
\begin{align*}
 \min_{\substack{\bm{Q} \succeq 0 \\ \bm{\lambda}_{ij}, \bm{\nu}_{ij} \in K_{\mathrm{exp}}^*}} \max_{\alpha,\beta,\bm{P}} \Bigg[ &\sum_{i \sim j} \alpha_{ij} + \sum_{i \not\sim j} \beta_{ij}  - C\tr(\bm{P})  \\
&+ \sum_{i \neq j} \bm{\lambda}_{ij} \cdot (P_{ij}, 1, \alpha_{ij}) + \sum_{i \neq j} \bm{\nu}_{ij} \cdot (1-P_{ij}, 1, \beta_{ij}) + \langle \bm{Q}, \bm{P}\rangle\Bigg].
\end{align*}
Collecting terms involving each primal variable, we obtain
\begin{align*}\label{eq:exp}
\min_{\substack{\bm{Q} \succeq 0 \\ \bm{\lambda}_{ij}, \bm{\nu}_{ij} \in K_{\mathrm{exp}}^*}} \max_{\alpha,\beta,\bm{P}} \quad & 
\sum_{i \sim j} [(1 + t_{ij})\alpha_{ij} + w_{ij}\beta_{ij}] + \sum_{i \not\sim j} [(1+w_{ij})\beta_{ij} + t_{ij}\alpha_{ij}] \\ 
& + \sum_{i} (Q_{ii}-C)P_{ii} + \sum_{i \neq j} [(Q_{ij} + r_{ij} - u_{ij})P_{ij} + s_{ij} + u_{ij} + v_{ij}],
\end{align*} 
where $\bm{\lambda}_{ij} = (r_{ij}, s_{ij}, t_{ij})$ and $\bm{\nu}_{ij} = (u_{ij}, v_{ij}, w_{ij})$.

This is in turn equivalent to the following dual program:
\begin{equation}\label{eq:dual-unsimplified}
\begin{alignedat}{3}
\min              &\quad&  \sum_{i \neq j} (s_{ij} + u_{ij} + v_{ij}) &&          & \\
\text{subject to: } &\quad&  t_{ij} = -1, w_{ij} = 0  && &\quad \text{ for all } i \sim j  \\
                   &\quad&  t_{ij} = 0, w_{ij} = -1  && &\quad \text{ for all } i \not\sim j  \\
                   &\quad& Q_{ij} = u_{ij} - r_{ij}  && &\quad \text{ for all } i \neq j  \\
                   &\quad&  \bm{\lambda}_{ij}, \bm{\nu}_{ij} \in K_{\exp}^*    && &\quad \text{ for all } i \neq j  \\
                   &\quad&  \diag \bm{Q} = C    && &  \\
                   &\quad&  \bm{Q} \succeq 0   && &
\end{alignedat}
\end{equation}


These explicit constraints follow from Lagrangian duality, or alternatively by noting that the outer minimization necessarily avoids inner objective values of $+\infty$. 

Recall from \cref{def:exp} that
\[ K_{\mathrm{exp}}^* \coloneqq \{(u, v, w) \in \RR^3 : u \ge -w\exp(v/w - 1), u > 0, w < 0\} \cup \{(u, v, 0) : u, v \ge 0\}. \]
Applying the explicit constraints in \cref{eq:dual-unsimplified}, we simplify the dual objective function as follows:
\begin{itemize}
\item For $i\sim j$, $\bm{\lambda}_{ij} \in K_{\mathrm{exp}}^*$ simplifies to $s_{ij} \ge -1 - \log r_{ij}$. Also, $w_{ij} = 0$ implies that $v_{ij} \ge 0$. Since there are no other constraints on $v_{ij}$, and since we are minimizing $v_{ij}$, it follows that at optimality $ v_{ij}^* = 0$ for $i \sim j$. Hence we can rewrite the $\sum_{i \neq j} v_{ij}$ as $\sum_{i \not\sim j} v_{ij}$. 

\item For $i \not\sim j$, the constraint for $\bm{\nu}_{ij}$ simplifies to $v_{ij} \ge -1 - \log u_{ij}$. Similarly, $t_{ij} = 0$ implies that $s_{ij} \ge 0$, and at optimality $s_{ij}^* = 0$. Hence we can rewrite $\sum_{i \neq j} s_{ij}$ as $\sum_{i \sim j} s_{ij}$. Putting these steps all together, the objective function becomes 
\[
\min \sum_{i \sim j} s_{ij} + \sum_{i\not\sim j} v_{ij} + \sum_{i \neq j} u_{ij}.
\]
\end{itemize}
Thus we obtain \cref{prob:dual}, as desired.
\end{proof}

We now prove the following simple lemma which is useful in the proof of \cref{prop:algebraic-rel} and \cref{lemma:equiv}.
\begin{lemma}\label{lemma:tech}
Let $x$ be a positive real number. If $x + \log x = 1$ or $x - \log x = 1$, then $x = 1$. Similarly, let $x$ be a negative real number. If $x + \log(-x) = -1$ or $x - \log(-x) = -1$, then $x=-1$. 
\end{lemma}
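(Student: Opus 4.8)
The plan is to reduce everything to the monotonicity and convexity of two elementary one–variable functions and then verify the single obvious root in each case.

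First I would handle the positive case. Define $f(x) \coloneqq x + \log x$ for $x > 0$. Then $f'(x) = 1 + \tfrac{1}{x} > 0$ on $(0,\infty)$, so $f$ is strictly increasing and hence injective; since $f(1) = 1$, the equation $f(x) = 1$ has the unique solution $x = 1$. For the other equation, set $g(x) \coloneqq x - \log x$ on $(0,\infty)$. Here $g'(x) = 1 - \tfrac{1}{x}$, which is negative on $(0,1)$ and positive on $(1,\infty)$, so $g$ attains a strict global minimum at $x = 1$ with $g(1) = 1$. Thus $g(x) > 1$ for every $x \neq 1$, and $g(x) = 1$ forces $x = 1$. (Equivalently, one can invoke the standard inequality $\log x \le x - 1$ with equality iff $x = 1$ for this half.)

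Next I would reduce the negative case to the positive one by the substitution $x = -y$ with $y > 0$. Then $x + \log(-x) = -1$ becomes $-y + \log y = -1$, i.e. $y - \log y = 1$, so the positive case gives $y = 1$ and hence $x = -1$. Likewise $x - \log(-x) = -1$ becomes $-y - \log y = -1$, i.e. $y + \log y = 1$, again forcing $y = 1$ and $x = -1$.

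There is no real obstacle here: the only thing to be careful about is bookkeeping the four sign combinations and making sure the derivative/convexity argument is applied on the correct domain (the domain of $\log$), which the substitution $x = -y$ takes care of cleanly. I would write the positive case in full and then dispatch the negative case in one or two lines by the substitution.
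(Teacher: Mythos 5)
Your proposal is correct and follows essentially the same route as the paper: monotonicity of $x + \log x$ and the strict minimum of $x - \log x$ at $x=1$, with the negative case reduced to the positive one (the paper dispatches it with "follows similarly," while you make the substitution $x=-y$ explicit). No gaps.
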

\begin{proof}
First suppose $x > 0$. By inspection, $x = 1$ satisfies $x \pm \log x = 1$. Since $x + \log x$ is monotonic, $x=1$ is the unique solution. Since $x - \log x$ is strictly convex and has a critical point at $x=1$, $x=1$ is the unique solution. The case of $x < 0$ follows similarly.
\end{proof}
\begin{proof}[Proof of \cref{lemma:equiv}]
Note that $s_{ij}$ is constrained by a single inequality $s_{ij} \ge -1 - \log r_{ij}$ and is being minimized in the objective function. Thus, at optimality, the inequality becomes tight, and we have $s_{ij} = -1 - \log r_{ij}$. By a similar argument, $v_{ij} = -1 - \log u_{ij}$ at optimality. These considerations reduce the dual to the following form, equivalent at optimality:
\begin{equation}
\begin{alignedat}{3}
\min              &\quad&  \sum_{i \sim j} (u_{ij}-1-\log r_{ij}) + \sum_{i \not\sim j} (u_{ij}-1-\log u_{ij}) &&          & \\
\text{subject to: } &\quad&  u_{ij} - r_{ij} = Q_{ij}     && &\quad \text{ for all } i \neq j \\
                    &\quad&  u_{ij}, r_{ij} \ge 0    && &\quad \text{ for all } i \neq j  \\
                   &\quad&  \diag \bm{Q} = C    && &  \\
                   &\quad&  \bm{Q} \succeq 0   && &
\end{alignedat}
\end{equation}
We simplify the dual further by considering each pair $i \ne j$ individually:
\begin{itemize}
\item Suppose $i \sim j$. Substituting $u_{ij} - Q_{ij}$ for $r_{ij}$, the corresponding objective term is $-1 - \log(u_{ij} - Q_{ij}) + u_{ij}$.
If $Q_{ij} \ge -1$, then by \cref{lemma:tech} this has a unique minimum at $u_{ij} = Q_{ij} + 1 \ge 0$, which is in the feasible region. Thus in this case $r_{ij} = 1$, and the objective term reduces to $Q_{ij}$. 

On the other hand, if $Q_{ij} < -1$, then due to domain constraints and monotonicity of the objective, the minimizer is $u_{ij} = 0$, so $r_{ij} = -Q_{ij}$. Thus, in this case the objective term is $-1 - \log(-Q_{ij})$. 

\item Next, consider the term $u_{ij} - 1 - \log u_{ij}$ where $i \not\sim j$.
If $Q_{ij} \le 1$, then this has a unique minimum at $u_{ij} = 1$ by \cref{lemma:tech}. Furthermore, the constraint $r_{ij} = u_{ij} - Q_{ij} \ge 0$ is satisfied. These terms vanish in the objective function. However, if $Q_{ij} \ge 1$, then the optimal choice for $u_{ij}$ is $u_{ij} = Q_{ij}$. In this case $r_{ij} = 0$, which simplifies the objective term to $Q_{ij} - \log Q_{ij} - 1$, thus proving the claim.
\end{itemize}
\end{proof}

\section{Proofs of likelihood bounds}\label{app:b}
We begin by restating the recent Bernoulli concentration result from \cite{zhao2020note}.

\begin{theorem}[{\cite[Corollary 1]{zhao2020note}}]\label{thm:bern-concen}
For $p_{ij} \in [0, 1]$, with $1 \le i \neq j \le n$, and all $\eta > 0$, we have 
\[
\PP\Bigg[\Big|L_{\bm{A}}(\bm{P}) - \EE[L_{\bm{A}}(\bm{P})]\Big| \ge \eta \Bigg] \le 4\exp\left(-\frac{\eta^2}{4(2n(n-1)+\eta)}\right).
\]
\end{theorem}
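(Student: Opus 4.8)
The plan is to rewrite the centered log-likelihood as a sum of independent mean-zero random variables and then apply a Bernstein-type tail bound; the one subtle point is that the natural summands are \emph{unbounded}, so Hoeffding's and McDiarmid's inequalities do not apply directly. First I would expand, using $\EE[A_{ij}]=P_{ij}$, to get
\[
L_{\bm{A}}(\bm{P})-\EE[L_{\bm{A}}(\bm{P})]=\sum_{i\neq j}(A_{ij}-P_{ij})\log\frac{P_{ij}}{1-P_{ij}}=2\sum_{i<j}(A_{ij}-P_{ij})\log\frac{P_{ij}}{1-P_{ij}},
\]
where the last equality uses $A_{ij}=A_{ji}$ and $P_{ij}=P_{ji}$. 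Thus the quantity of interest is $\sum_{i<j}T_{ij}$ with $T_{ij}\coloneqq 2(A_{ij}-P_{ij})\log\frac{P_{ij}}{1-P_{ij}}$ independent, mean zero, and each a function of the single Bernoulli variable $A_{ij}$. Pairs with $P_{ij}\in\{0,1\}$ are deterministic and contribute $0$, so I may assume $0<P_{ij}<1$. The logit weight $|\log\frac{P_{ij}}{1-P_{ij}}|$ blows up as $P_{ij}\to0$ or $1$, but the factor $A_{ij}-P_{ij}$ is small precisely there, and making this trade-off quantitative is the crux.

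The central step I would isolate as a lemma is a uniform Bernstein moment bound on the $T_{ij}$. By the symmetry $(P_{ij},A_{ij})\mapsto(1-P_{ij},1-A_{ij})$, which leaves the law of $T_{ij}$ unchanged, it suffices to treat $p\coloneqq P_{ij}\le\tfrac{1}{2}$, and then a direct computation gives, for every integer $k\ge2$,
\[
\EE\bigl[|T_{ij}|^{k}\bigr]=2^{k}\,p(1-p)\Bigl(\log\tfrac{1-p}{p}\Bigr)^{k}\bigl[(1-p)^{k-1}+p^{k-1}\bigr]\le 2^{k}\,p\Bigl(\log\tfrac{1}{p}\Bigr)^{k},
\]
using $\log\tfrac{1-p}{p}\le\log\tfrac{1}{p}$ and the elementary inequality $(1-p)^{k-1}+p^{k-1}\le1$ on $[0,\tfrac{1}{2}]$. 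The heart of the matter is then the calculus fact $\sup_{0<p\le1/2}p(\log\tfrac{1}{p})^{k}=(k/e)^{k}$ (attained at $p=e^{-k}$), together with Stirling's bound $(k/e)^{k}\le k!/\sqrt{2\pi k}\le k!/2$ for $k\ge2$. This yields $\EE[|T_{ij}|^{k}]\le 2^{k}k!/2=\tfrac{k!}{2}\cdot 4\cdot 2^{k-2}$, i.e.\ each $T_{ij}$ obeys the Bernstein condition with variance proxy $4$ and scale $2$; retaining the true variances $\sigma_{ij}^{2}=\EE[T_{ij}^{2}]$, which satisfy $\sum_{i<j}\sigma_{ij}^{2}=O(n^{2})$ uniformly in $\bm{P}$, sharpens the constants.

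Finally I would feed these moment bounds into the standard Bernstein inequality for sums of independent random variables satisfying a Bernstein moment condition, applied to the upper and lower tails of $\sum_{i<j}T_{ij}$ separately (the law of $T_{ij}$ is asymmetric --- bounded above by a constant but heavy-tailed below --- so treating the two tails separately is natural), and then collect $\sum_{i<j}\sigma_{ij}^{2}$ and the common scale into a single denominator, arriving at a bound of the form $c_{1}\exp(-\eta^{2}/(c_{2}(n(n-1)+\eta)))$, which matches the stated inequality up to absolute constants. The main obstacle is precisely the uniform moment control: one must show that the logit weights, although unbounded, are tamed after multiplication by $A_{ij}-P_{ij}$, which is exactly the content of the estimate $p(\log\tfrac{1}{p})^{k}\le(k/e)^{k}\le k!/2$; everything after that is routine bookkeeping. (The specific numerical constants $4$ and $4(2n(n-1)+\eta)$ appearing in the statement are those of Zhao \cite{zhao2020note}.)
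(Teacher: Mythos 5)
Your proposal is correct, but note that the paper itself does not prove this statement at all: it is imported verbatim as \cite[Corollary 1]{zhao2020note}, so what you have written is a self-contained substitute for the citation rather than an alternative to an in-paper argument. Your substitute checks out: the centering identity $L_{\bm{A}}(\bm{P})-\EE[L_{\bm{A}}(\bm{P})]=\sum_{i\neq j}(A_{ij}-P_{ij})\log\frac{P_{ij}}{1-P_{ij}}$ is right, the degenerate pairs $P_{ij}\in\{0,1\}$ indeed contribute nothing, the symmetry reduction to $p\le\tfrac12$ is valid, and the moment bound $\EE[|T_{ij}|^k]\le 2^k\,p(\log\tfrac1p)^k\le 2^k(k/e)^k\le \tfrac{k!}{2}\cdot 4\cdot 2^{k-2}$ is exactly the right way to tame the unbounded logit weights (the sup of $p(\log\tfrac1p)^k$ at $p=e^{-k}$ and the Stirling step are both correct). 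Feeding the Bernstein condition with variance proxy $4$ and scale $2$ over the $n(n-1)/2$ independent pairs into the standard Bernstein inequality gives $2\exp\bigl(-\eta^2/(4(n(n-1)+\eta))\bigr)$ for the two-sided tail, which is in fact slightly stronger than the quoted bound $4\exp\bigl(-\eta^2/(4(2n(n-1)+\eta))\bigr)$ and hence implies it, so your closing remark that the constants only match ``up to absolute constants'' undersells the argument: carried through, it yields the stated inequality exactly, with room to spare. This is, in spirit, the same Bernstein-type argument as in Zhao's note, so there is no gap to flag; the only caveat is to state explicitly which version of Bernstein's inequality (moment-condition form) you invoke, since that is where the final constants come from.
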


\begin{proof}[Proof of \cref{thm:likelihood-bound}]
We construct an upper bound by way of a dual-feasible solution to \cref{prob:dualopt} in the restricted domain of symmetric \emph{diagonally dominant} matrices, which are manifestly PSD. We construct a dual-feasible, diagonally dominant $\bm{Q}$ as follows.
\begin{equation}\label{eq:dual-construction}
    Q_{ij} = \begin{cases}
    C & i = j \\
    0 & i \not\sim j \\
    -C/d_i & i \sim j
    \end{cases}
\end{equation}

In fact, for $C > \max(d_i)$, \cref{eq:dual-construction} is optimal for symmetric, diagonally dominant $\bm{Q}$. To see this, note the following facts. 
\begin{itemize}
    \item The objective function of \cref{prob:dualopt} is monotonically nondecreasing in $Q_{ij}$.
    \item When we impose diagonal dominance, the program decouples for each $i$.
    \item The non-edge contribution to the objective function is nonnegative and vanishes for $Q_{ij} \le 1$.
\end{itemize}
Using these facts, the restricted program decouples to the following for each $i$:
\begin{equation}\tag{DCREG-RES}\label{prob:dual-res}
\begin{alignedat}{3}
\min \quad             & \sum_{i\sim j} -1 - \log(-Q_{ij})  \\
\text{subject to: } 
                  &  -C \le Q_{ij} \le -1 & i \sim j \\
                  & -\sum_{j} Q_{ij} \le C & \text{ for all } i
\end{alignedat}
\end{equation}
The assumption that $C > \max(d_i)$ ensures that the feasible set is nonempty. In fact, \cref{prob:dual-res} can be analytically minimized using AM-GM to see that indeed we should take $Q_{ij} = -\frac{C}{d_i}$.

Plugging \cref{eq:dual-construction} back into the original objective function in \cref{prob:dualopt}, we obtain 
\[
\sum_{i \sim j} -1 - \log\left(\frac{C}{d_i}\right) = -2m - \sum_i d_i\log\left(\frac{C}{d_i}\right).
\]

Thus $\mathscr{L}_{\bm{A}, C}(\bm{P}^*) \le -2m - \sum_i d_i\log(\frac{C}{d_i})$, and combining this with \cref{corollary:trace}, we obtain 
\begin{equation}\label{ineq:dual-feasible}
L_{\bm{A}}(\bm{P}^*) \le -\sum_i d_i\log\left(\frac{C}{d_i}\right) = -2m\log C + \sum_i d_i\log d_i.
\end{equation}

Now write
\[
L_{\bm{A}}(\bm{P}) - L_{\bm{A}}(\bm{P}^*) = (L_{\bm{A}}(\bm{P}) - \EE[L_{\bm{A}}(\bm{P})]) + (\EE[L_{\bm{A}}(\bm{P})] - L_{\bm{A}}(\bm{P}^*)).
\]
We can combine \cref{thm:bern-concen} with \cref{ineq:dual-feasible} to obtain \cref{ineq:likelihood-lower}, as desired.

For the upper bound on $L_{\bm{A}}(\bm{P}) - L_{\bm{A}}(\bm{P}^*)$, note that the scaled signless Laplacian $\frac{1}{C}(\bm{D} + \bm{A})$ in \cref{remark:signless} achieves objective value $-2m\log C - 2m$ for \cref{prob:creg}. Hence applying \cref{thm:bern-concen} again we obtain \cref{ineq:likelihood-upper}.
\end{proof}

\section{Proofs of consistency results}\label{app:c}
\begin{lemma}\label{lemma:expected-edges}
For any RDPG generated by edge probability matrix $\bm{P}$, we have
\begin{equation}
    \EE[m] = \sum_{i < j} P_{ij} \le \frac{n-1}{2}\tr \bm{P}.
\end{equation}
\end{lemma}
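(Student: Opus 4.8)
The plan is to separate the statement into its two pieces: the equality $\EE[m] = \sum_{i<j} P_{ij}$, which is just linearity of expectation, and the inequality $\sum_{i<j} P_{ij} \le \tfrac{n-1}{2}\tr\bm{P}$, which is where positive semidefiniteness of $\bm{P}$ enters. For the equality, write $m = \sum_{i<j} A_{ij}$; since $A_{ij}$ is Bernoulli with parameter $P_{ij}$ we have $\EE[A_{ij}] = P_{ij}$, and linearity of expectation gives $\EE[m] = \sum_{i<j} P_{ij}$ directly.

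For the inequality I would reuse the elementary $2\times 2$-minor observation already exploited in the proof of \cref{prop:main-bound}: because $\bm{P} = \bm{X}\bm{X}^\top \succeq 0$, every principal $2\times 2$ submatrix is PSD, so $P_{ij} \le \sqrt{P_{ii}P_{jj}}$ for all $i \neq j$. Applying AM--GM, $\sqrt{P_{ii}P_{jj}} \le \tfrac{1}{2}(P_{ii} + P_{jj})$. Summing this bound over all pairs $i < j$, each diagonal entry $P_{kk}$ is counted exactly $n-1$ times (once for each partner index), so
\[
\sum_{i<j} P_{ij} \le \frac{1}{2}\sum_{i<j}\bigl(P_{ii} + P_{jj}\bigr) = \frac{n-1}{2}\sum_{k} P_{kk} = \frac{n-1}{2}\tr\bm{P},
\]
which is the claimed bound.

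I do not expect any real obstacle here; the argument is a one-line consequence of PSD-ness plus AM--GM, entirely parallel to machinery already developed in \cref{sec:main}. The only minor point worth noting explicitly is that although the diagonal entries of $\bm{P}$ carry no probabilistic interpretation, they are still well-defined nonnegative reals (indeed $P_{ii} = \norm{X_i}^2$), so the quantity $\tr\bm{P}$ on the right-hand side is meaningful and the chain of inequalities is legitimate.
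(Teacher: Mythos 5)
Your proof is correct and follows essentially the same route as the paper: the equality is linearity of expectation over the Bernoulli entries, and the inequality comes from the PSD $2\times 2$ principal minors giving $P_{ij} \le \tfrac{1}{2}(P_{ii}+P_{jj})$ (the paper states this bound directly, you pass through $\sqrt{P_{ii}P_{jj}}$ and AM--GM, which is the same fact), followed by the same counting over pairs to reach $\tfrac{n-1}{2}\tr\bm{P}$. No gaps.
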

\begin{proof}
The first equality immediately follows from the generative model. For the inequality, note that since $\bm{P}$ is PSD, we have $\frac{P_{ii} + P_{jj}}{2} \ge P_{ij}$. Summing over all $i \neq j$, we obtain $(n-1)\tr \bm{P}\ge \sum_{i \neq j} P_{ij}$, and the result follows by symmetry of $\bm{P}$.
\end{proof}

Now we take the dual of $\cref{prob:reg-mod}$ in hopes of obtaining a similar trace bound to \cref{corollary:trace}. We introduce dual variables $\bm{Q} \succeq 0$ and $c_{ij}, d_{ij} \ge 0$ for $i \neq j$. Taking the Lagrangian and swapping $\min$ and $\max$ as before, we obtain 
\begin{align*}
 \min_{\substack{\bm{Q} \succeq 0 \\ c_{ij}, d_{ij} \ge 0}} \max_{\bm{P}} \Bigg[ &\sum_{i \sim j} \log P_{ij} + \sum_{i \not\sim j} \log(1-P_{ij}) - C\tr(\bm{P})  \\
&+ \sum_{i \neq j} c_{ij}\left(P_{ij}-\frac{1}{n}\right) + d_{ij}\left(1-\frac{1}{n}-P_{ij}\right) +  \langle \bm{Q}, \bm{P}\rangle\Bigg].
\end{align*}

The KKT conditions require the gradient of the Lagrangian with respect to $\bm{P}$ to vanish at optimality. We do casework on $i$ and $j$.
\begin{itemize}
    \item If $i \sim j$, then we obtain $-\frac{1}{P_{ij}} = c_{ij} - d_{ij} + Q_{ij} $.
    \item If $i \not\sim j$, then we obtain
    $\frac{1}{1-P_{ij}} = c_{ij} - d_{ij} + Q_{ij}$.
    \item If $i=j$, then we obtain $Q_{ii} = C$. 
\end{itemize}
Hence in the dual problem, $\bm{Q} \succeq 0$ and $\diag \bm{Q} = C$, from which it follows that $|Q_{ij}| \le C$. The KKT conditions also require that $c_{ij}^*(P_{ij}^* - \frac{1}{n}) = 0$,  $d_{ij}^*(1 - \frac{1}{n} - P_{ij}^*) = 0$, and $\bm{P}^*\bm{Q}^* = 0$. Considering the diagonal entries of the latter constraint as before (see \cref{prop:main-bound}) we obtain
\begin{equation}\label{ineq:new-cslackness}
CP_{ii}^* + \sum_{i \sim j} P_{ij}^*Q_{ij}^* + \sum_{i\not\sim j} P_{ij}^*Q_{ij}^* = 0. 
\end{equation}

We now bound the edge terms by way of the KKT conditions.
\begin{itemize}
    \item If $\frac{1}{n} < P_{ij}^* \le 1-\frac{1}{n}$, then $Q_{ij}^* \ge -\frac{1}{P_{ij}^*}$. Hence $P_{ij}^*Q_{ij}^* \ge -1$.
    \item If $P_{ij}^* = \frac{1}{n}$, then recalling that $Q_{ij}^* \ge -C$, we have $P_{ij}^*Q_{ij}^* \ge -\frac{C}{n}$.
\end{itemize}

Before combining these two bounds, we define $N(i) \coloneqq \{j: i \sim j\}$ and $N(i)^c \coloneqq \{j: i \not\sim j\}$. We have
\begin{equation}\label{ineq:new-trace-edge}
\sum_{N(i)} P_{ij}^*Q_{ij}^* \ge -d_i + \sum_{\Lambda(\frac{1}{n}) \cap N(i)} \left(1 -  \frac{C}{n}\right).
\end{equation}

We now turn to bounding the nonedge terms.
\begin{itemize}
    \item If $\frac{1}{n} < P_{ij}^* < 1-\frac{1}{n}$, then $Q_{ij}^* = \frac{1}{1-P_{ij}^*}$. Hence $P_{ij}^*Q_{ij}^* > \frac{1}{n-1}$.
    \item If $P_{ij}^* = 1 - \frac{1}{n}$, then $Q_{ij}^* \ge n$. Hence $P_{ij}^*Q_{ij}^* \ge n-1$.
    \item If $P_{ij}^* = \frac{1}{n}$, then again recalling that $Q_{ij}^* \ge -C$, we have $P_{ij}^*Q_{ij}^* \ge -\frac{C}{n}$.
\end{itemize}

Hence 
\begin{equation}\label{ineq:new-trace-nonedge}
\sum_{N(i)^c} P_{ij}^*Q_{ij}^* \ge \sum_{\Lambda((\frac{1}{n}, 1-\frac{1}{n})) \cap N(i)^c} \frac{1}{n-1} + \sum_{\Lambda(1-\frac{1}{n}) \cap N(i)^c} (n-1) - \sum_{\Lambda(\frac{1}{n}) \cap N(i)^c} \frac{C}{n}.
\end{equation}

Having done all of the legwork, now we are ready to quickly prove \cref{prop:new-trace-bound}.
\begin{proof}[Proof of \cref{prop:new-trace-bound}]
Plug \cref{ineq:new-trace-edge} and \cref{ineq:new-trace-nonedge} into \cref{ineq:new-cslackness}. Summing over $i$ and dividing by $C$, we obtain \cref{ineq:new-trace-bound}. \Cref{ineq:new-trace-bound-simp} immediately follows since $\#\Lambda \ge 0$. 
\end{proof}

\begin{proof}[Proof of \cref{thm:generalization}]
Much of the following proof takes inspiration from \cite{davenport20141}, although there are a few key modifications that must be made to adapt their technique. The first key observation is that $\log x$ and $\log(1-x)$ are $n$-Lipschitz on the interval $[1/n, 1-1/n]$. In order to apply the contraction principle \cite[Theorem 4.12]{ledoux2013probability}, under typical hypotheses we need a Lipschitz function $\varphi$ which vanishes at $0$. However, upon closer examination of its proof, it suffices for our purposes to show that $|\log x| \le \rho x$ for some $\rho$ and all $x \in [1/n, 1-1/n]$. It is not hard to see that the minimum such $\rho$ is $n\log n$. Applying symmetrization \cite[Section 6.4]{vershynin2018high} by introducing symmetric Bernoulli variables $\epsilon_{ij}$, followed by contraction \cite{ledoux2013probability}, we arrive at 
\begin{align*}
\EE\left[\sup_{\bm{M} \in F_{\bm{A}}} W_{\bm{A}}(\bm{M})\right] & \le \EE_{\bm{A}, \epsilon}\left[\sup_{\bm{M} \in F_{\bm{A}}} \sum_{i \neq j} \epsilon_{ij}(\1_{i\sim j}\log(M_{ij}) + \1_{i\not\sim j}\log(1-M_{ij}))\right] \\
&\le n\log n \cdot \EE_{\bm{A}, \epsilon} \left[\sup_{\bm{M} \in F_{\bm{A}}} \sum_{i \neq j} \epsilon_{ij} (\1_{i \sim j} M_{ij} + \1_{i \not\sim j} M_{ij})\right] \\
&\le n\log n \cdot \EE_{\bm{A}, \epsilon} \left[\sup_{\bm{M} \in F_{\bm{A}}}|\langle\bm{E}, \bm{M}\rangle|\right].
\end{align*}
In the third line we have defined the random matrix $\bm{E}$ by $E_{ij} \coloneqq \epsilon_{ij}$ for $i \neq j$ and $E_{ii} \coloneqq 0$, and we have used the fact that $\1_{i \sim j} + \1_{i\not\sim j} = \1_{i \neq j}$. Now we have $|\langle\bm{E}, \bm{M}\rangle| \le \norm{\bm{E}} \tr(\bm{M})$, since $\bm{M}$ is PSD. By \cite[Theorem 1.1]{seginer2000expected} we have that there is some absolute constant $K$ so that $\norm{\bm{E}} \le K\sqrt{n}$. 
We thus have that
\begin{align*}
\EE\left[\sup_{\bm{M} \in F_{\bm{A}}} W_{\bm{A}}(\bm{M})\right] &\le  n\log n \cdot \EE_{\epsilon}[\norm{\bm{E}}] \EE_{\bm{A}}\left[\sup_{\bm{M} \in F_{\bm{A}}} \tr(\bm{M})\right] \\
& \le Kn^{3/2} \log n \EE_{\bm{A}}[\tr(\bm{P}^*)],
\end{align*}
where we have used the definition of $F_{\bm{A}}$ in the last line. 
Dividing through by $n(n-1)$, and applying Markov's inequality with parameter $\delta$ we arrive at the desired inequality.
\end{proof}
\section*{Acknowledgments}
We would like to thank Justin Solomon and Chris Scarvelis for their thoughtful comments. We would also like to thank Nicolas Boumal and Aude Genevay for helpful discussions. 
\bibliographystyle{siamplain}
\bibliography{references}

\begin{thebibliography}{10}

\bibitem{abbe2017community}
{\sc E.~Abbe}, {\em Community detection and stochastic block models: recent
  developments}, The Journal of Machine Learning Research, 18 (2017),
  pp.~6446--6531.

\bibitem{agrawal2018rewriting}
{\sc A.~Agrawal, R.~Verschueren, S.~Diamond, and S.~Boyd}, {\em A rewriting
  system for convex optimization problems}, Journal of Control and Decision, 5
  (2018), pp.~42--60.

\bibitem{agterberg2020vertex}
{\sc J.~Agterberg, Y.~Park, J.~Larson, C.~White, C.~E. Priebe, V.~Lyzinski,
  et~al.}, {\em Vertex nomination, consistent estimation, and adversarial
  modification}, Electronic Journal of Statistics, 14 (2020), pp.~3230--3267.

\bibitem{mosek}
{\sc M.~ApS}, {\em The MOSEK Modeling Cookbook. Version 3.2.2B}, 2020,
  \url{https://docs.mosek.com/modeling-cookbook/expo.html}.

\bibitem{arroyo2018maximum}
{\sc J.~Arroyo, D.~L. Sussman, C.~E. Priebe, and V.~Lyzinski}, {\em Maximum
  likelihood estimation and graph matching in errorfully observed networks},
  arXiv preprint arXiv:1812.10519,  (2018).

\bibitem{athreya2017statistical}
{\sc A.~Athreya, D.~E. Fishkind, M.~Tang, C.~E. Priebe, Y.~Park, J.~T.
  Vogelstein, K.~Levin, V.~Lyzinski, and Y.~Qin}, {\em Statistical inference on
  random dot product graphs: a survey}, The Journal of Machine Learning
  Research, 18 (2017), pp.~8393--8484.

\bibitem{athreya2016limit}
{\sc A.~Athreya, C.~E. Priebe, M.~Tang, V.~Lyzinski, D.~J. Marchette, and D.~L.
  Sussman}, {\em A limit theorem for scaled eigenvectors of random dot product
  graphs}, Sankhya A, 78 (2016), pp.~1--18.

\bibitem{boumal2016non}
{\sc N.~Boumal, V.~Voroninski, and A.~Bandeira}, {\em The non-convex
  burer-monteiro approach works on smooth semidefinite programs}, in Advances
  in Neural Information Processing Systems, 2016, pp.~2757--2765.

\bibitem{boyd_vandenberghe_2004}
{\sc S.~Boyd and L.~Vandenberghe}, {\em Convex Optimization}, Cambridge
  University Press, 2004, \url{https://doi.org/10.1017/CBO9780511804441}.

\bibitem{burer2003nonlinear}
{\sc S.~Burer and R.~D. Monteiro}, {\em A nonlinear programming algorithm for
  solving semidefinite programs via low-rank factorization}, Mathematical
  Programming, 95 (2003), pp.~329--357.

\bibitem{cai2013max}
{\sc T.~Cai and W.-X. Zhou}, {\em A max-norm constrained minimization approach
  to 1-bit matrix completion}, The Journal of Machine Learning Research, 14
  (2013), pp.~3619--3647.

\bibitem{chen2016robust}
{\sc L.~Chen, C.~Shen, J.~T. Vogelstein, and C.~E. Priebe}, {\em Robust vertex
  classification}, IEEE Transactions on Pattern Analysis and Machine
  Intelligence, 38 (2016), pp.~578--590.

\bibitem{choi2012stochastic}
{\sc D.~S. Choi, P.~J. Wolfe, and E.~M. Airoldi}, {\em Stochastic blockmodels
  with a growing number of classes}, Biometrika, 99 (2012), pp.~273--284.

\bibitem{cifuentes2019burer}
{\sc D.~Cifuentes}, {\em On the burer-monteiro method for general semidefinite
  programs}, arXiv preprint arXiv:1904.07147,  (2019).

\bibitem{collins2002generalization}
{\sc M.~Collins, S.~Dasgupta, and R.~E. Schapire}, {\em A generalization of
  principal components analysis to the exponential family}, in Advances in
  neural information processing systems, 2002, pp.~617--624.

\bibitem{cvetkovic2007signless}
{\sc D.~Cvetkovi{\'c}, P.~Rowlinson, and S.~K. Simi{\'c}}, {\em Signless
  laplacians of finite graphs}, Linear Algebra and its applications, 423
  (2007), pp.~155--171.

\bibitem{davenport20141}
{\sc M.~A. Davenport, Y.~Plan, E.~Van Den~Berg, and M.~Wootters}, {\em 1-bit
  matrix completion}, Information and Inference: A Journal of the IMA, 3
  (2014), pp.~189--223.

\bibitem{davenport2016overview}
{\sc M.~A. Davenport and J.~Romberg}, {\em An overview of low-rank matrix
  recovery from incomplete observations}, IEEE Journal of Selected Topics in
  Signal Processing, 10 (2016), pp.~608--622.

\bibitem{deford_old}
{\sc D.~DeFord and D.~Rockmore}, {\em A random dot product model for weighted
  networks}, arXiv preprint arXiv:1611.02530,  (2016).

\bibitem{diamond2016cvxpy}
{\sc S.~Diamond and S.~Boyd}, {\em {CVXPY}: {A} {P}ython-embedded modeling
  language for convex optimization}, Journal of Machine Learning Research, 17
  (2016), pp.~1--5.

\bibitem{dunn1973fuzzy}
{\sc J.~C. Dunn}, {\em A fuzzy relative of the isodata process and its use in
  detecting compact well-separated clusters}, Journal of Cybernetics,  (1973).

\bibitem{goemans1995improved}
{\sc M.~X. Goemans and D.~P. Williamson}, {\em Improved approximation
  algorithms for maximum cut and satisfiability problems using semidefinite
  programming}, Journal of the ACM (JACM), 42 (1995), pp.~1115--1145.

\bibitem{landgraf2015dimensionality}
{\sc A.~J. Landgraf and Y.~Lee}, {\em Dimensionality reduction for binary data
  through the projection of natural parameters}, arXiv preprint
  arXiv:1510.06112,  (2015).

\bibitem{ledoux2013probability}
{\sc M.~Ledoux and M.~Talagrand}, {\em Probability in Banach Spaces:
  isoperimetry and processes}, Springer Science \& Business Media, 2013.

\bibitem{moody_mucha_2013}
{\sc J.~MOODY and P.~J. MUCHA}, {\em Portrait of political party polarization},
  Network Science, 1 (2013), p.~119–121.

\bibitem{Mucha2}
{\sc P.~J. Mucha and M.~A. Porter}, {\em Communities in multislice voting
  networks}, Chaos: An Interdisciplinary Journal of Nonlinear Science, 20
  (2010).

\bibitem{Mucha876}
{\sc P.~J. Mucha, T.~Richardson, K.~Macon, M.~A. Porter, and J.-P. Onnela},
  {\em Community structure in time-dependent, multiscale, and multiplex
  networks}, Science, 328 (2010), pp.~876--878.

\bibitem{o2015maximum}
{\sc L.~O'Connor, M.~M{\'e}dard, and S.~Feizi}, {\em Maximum likelihood latent
  space embedding of logistic random dot product graphs}, arXiv preprint
  arXiv:1510.00850,  (2015).

\bibitem{scs}
{\sc B.~O'Donoghue, E.~Chu, N.~Parikh, and S.~Boyd}, {\em {SCS}: Splitting
  conic solver, version 2.1.2}.
\newblock \url{https://github.com/cvxgrp/scs}, Nov. 2019.

\bibitem{pauls}
{\sc S.~D. Pauls, G.~Leibon, and D.~Rockmore}, {\em The social identity voting
  model: Ideology and community structures}, Research \& Politics,  (2015).

\bibitem{poole_paper}
{\sc K.~T. Poole and H.~Rosenthal}, {\em A spatial model for legislative roll
  call analysis}, American Journal of Political Science, 29 (1985),
  pp.~357--384, \url{http://www.jstor.org/stable/2111172}.

\bibitem{porter}
{\sc M.~A. Porter, P.~J. Mucha, M.~Newman, and A.~Friend}, {\em Community
  structure in the united states house of representatives}, Physica A:
  Statistical Mechanics and its Applications, 386 (2007), pp.~414 -- 438.

\bibitem{recht2010guaranteed}
{\sc B.~Recht, M.~Fazel, and P.~A. Parrilo}, {\em Guaranteed minimum-rank
  solutions of linear matrix equations via nuclear norm minimization}, SIAM
  review, 52 (2010), pp.~471--501.

\bibitem{seginer2000expected}
{\sc Y.~Seginer}, {\em The expected norm of random matrices}, Combinatorics,
  Probability and Computing, 9 (2000), pp.~149--166.

\bibitem{singer2011angular}
{\sc A.~Singer}, {\em Angular synchronization by eigenvectors and semidefinite
  programming}, Applied and computational harmonic analysis, 30 (2011),
  pp.~20--36.

\bibitem{sussman2014consistent}
{\sc D.~L. {Sussman}, M.~{Tang}, and C.~E. {Priebe}}, {\em Consistent latent
  position estimation and vertex classification for random dot product graphs},
  IEEE Transactions on Pattern Analysis and Machine Intelligence, 36 (2014),
  pp.~48--57, \url{https://doi.org/10.1109/TPAMI.2013.135}.

\bibitem{tang2018limit}
{\sc M.~Tang, C.~E. Priebe, et~al.}, {\em Limit theorems for eigenvectors of
  the normalized laplacian for random graphs}, The Annals of Statistics, 46
  (2018), pp.~2360--2415.

\bibitem{tang2013universally}
{\sc M.~Tang, D.~L. Sussman, C.~E. Priebe, et~al.}, {\em Universally consistent
  vertex classification for latent positions graphs}, The Annals of Statistics,
  41 (2013), pp.~1406--1430.

\bibitem{vershynin2018high}
{\sc R.~Vershynin}, {\em High-dimensional probability: An introduction with
  applications in data science}, vol.~47, Cambridge university press, 2018.

\bibitem{wang2019joint}
{\sc S.~{Wang}, J.~{Arroyo}, J.~T. {Vogelstein}, and C.~E. {Priebe}}, {\em
  Joint embedding of graphs}, IEEE Transactions on Pattern Analysis and Machine
  Intelligence,  (2019).

\bibitem{waugh2011party}
{\sc A.~S. Waugh, L.~Pei, J.~H. Fowler, P.~J. Mucha, and M.~A. Porter}, {\em
  Party polarization in congress: A network science approach}, 2011,
  \url{https://arxiv.org/abs/0907.3509}.

\bibitem{yang2020simultaneous}
{\sc C.~Yang, C.~E. Priebe, Y.~Park, and D.~J. Marchette}, {\em Simultaneous
  dimensionality and complexity model selection for spectral graph clustering},
  Journal of Computational and Graphical Statistics,  (2020), pp.~1--20.

\bibitem{yoder2020vertex}
{\sc J.~Yoder, L.~Chen, H.~Pao, E.~Bridgeford, K.~Levin, D.~E. Fishkind,
  C.~Priebe, and V.~Lyzinski}, {\em Vertex nomination: The canonical sampling
  and the extended spectral nomination schemes}, Computational Statistics \&
  Data Analysis, 145 (2020), p.~106916.

\bibitem{young2007random}
{\sc S.~J. Young and E.~R. Scheinerman}, {\em Random dot product graph models
  for social networks}, in International Workshop on Algorithms and Models for
  the Web-Graph, Springer, 2007, pp.~138--149.

\bibitem{zhang2007binary}
{\sc Z.~Zhang, T.~Li, C.~Ding, and X.~Zhang}, {\em Binary matrix factorization
  with applications}, in Seventh IEEE international conference on data mining
  (ICDM 2007), IEEE, 2007, pp.~391--400.

\bibitem{zhao2020note}
{\sc Y.~Zhao}, {\em A note on new bernstein-type inequalities for the
  log-likelihood function of bernoulli variables}, Statistics \& Probability
  Letters,  (2020), p.~108779.

\end{thebibliography}
\end{document}